\newcommand{\sref}[1]{Section~(\ref{#1})}
\newcommand{\aref}[1]{Algorithm~\ref{#1}}
\newcommand{\eref}[1]{Equation (\ref{#1})}
\newcommand{\cref}[1]{~\cite{#1}}
\newtheorem{thm}{Theorem}
\newtheorem{Lemma}{Lemma}
\newtheorem{Definition}{Definition}
\newcommand{\cFree}{\textrm{$C_{free}$}\xspace}
\newcommand{\ctrlSpace}{\textrm{$U$}\xspace}
\newcommand{\initConf}{\textrm{$q_0$}\xspace}
\newcommand{\stSpace}{\textrm{$S$}\xspace}
\newcommand{\dyn}{\textrm{$f$}\xspace}
\newcommand{\ctsTraj}{\textrm{$\phi(t)$}\xspace}
\newcommand{\optCtsTraj}{\textrm{$\phi^*(t)$}\xspace}
\newcommand{\traj}{\textrm{$u(t)$}\xspace}
\newcommand{\ctsTrajp}{\textrm{$\phi'(t)$}\xspace}
\newcommand{\optCtrlSeq}{\textrm{${u}^*$}\xspace}
\newcommand{\ctrlSeq}{\textrm{$u$}\xspace}
\newcommand{\ctrl}{\textrm{$u$}}
\newcommand{\ctrlp}{\textrm{$u'$}}
\newcommand{\errs}{\textrm{$E_S$}\xspace}
\newcommand{\erru}{\textrm{$E_U$}\xspace}
\newcommand{\derrs}{\textrm{$\dot{E}_S$}\xspace}
\newcounter{comment}
\title{Analysis of Asymptotically Optimal Sampling-based Motion Planning Algorithms for  Lipschitz Continuous Dynamical Systems}
\author{
Georgios Papadopoulos \\
%\thanks{ Use footnote for providing further information about author (webpage, alternative address). Acknowledgments to funding agencies should go in the \textbf{Acknowledgments} section at the end of the paper.} \\
Department of Mechanical Engineering\\
Massachusetts Institute of Technology\\
77 Massachusetts Avenue, Cambridge, MA, USA   \\
\texttt{gpapado@mit.edu} \\
\And
Hanna Kurniawati \\
School of Inf. Technology \& Electrical Engineering \\
University of Queensland \\
St Lucia, Brisbane, QLD, Australia \\
\texttt{hannakur@uq.edu.au} \\
%\AND
%Coauthor \\
%Affiliation \\
%Address \\
%\texttt{email} \\
\And
Nicholas M. Patrikalakis \\
Department of Mechanical Engineering\\
Massachusetts Institute of Technology\\
77 Massachusetts Avenue, Cambridge, MA, USA   \\
\texttt{nmp@mit.edu} \\ 
%\And
%Coauthor \\
%Affiliation \\
%Address \\
%\texttt{email} \\
%(if needed)\\
}
\begin{document}

\maketitle

%\doublespacing

\begin{abstract}
Over the last 20 years significant effort has been dedicated to the development of sampling-based motion planning algorithms such as the Rapidly-exploring Random Trees (RRT) and its asymptotically optimal version (e.g. RRT$^*$). However,  asymptotic optimality for RRT$^*$ only holds for   linear  and fully actuated systems or for a small number of non-linear systems (e.g. Dubin's car) for which a steering function is available. The purpose of this paper  is to show that asymptotically optimal motion planning for dynamical systems with differential constraints  can be achieved without the use of a steering function.    We develop a novel analysis on sampling-based planning algorithms that sample the control space. This analysis demonstrated that asymptotically optimal path planning for any Lipschitz continuous dynamical system can be achieved by sampling the control space directly. We also determine theoretical bounds on the convergence rates for this class of  algorithms.  As the number of iterations  increases, the trajectory generated by these algorithms, approaches the optimal control trajectory, with probability one. Simulation results are promising.

\end{abstract}
%\doublespacing

\section{Introduction}
In this paper, we are interested in optimal motion planning for robots with challenging dynamics. Given a robot with perfectly known dynamics and an environment map that includes the initial state of the agent, the goal region, and the obstacles, an optimal motion planner computes a set of control inputs that drive the agent from the initial state to the goal region with minimum cost and without colliding with the obstacles.

In this paper, we are interested in robots with high-dimensional dynamics, thus we are using a sampling-based approach \cite{kavraki2005Book}, \cite{lavalle2006planning}, which is the most successful approach for high dimensional motion planning problems. Up to a few years ago, sampling-based motion planners were only known to be probabilistically complete, in the sense that given enough time, they will find an admissible trajectory with probability one, whenever such a trajectory exists. The question of optimality was addressed recently in \cref{Karaman.Frazzoli:IJRR11}. They developed sampling-based algorithms that are asymptotically optimal, such as PRM$^*$ and RRT$^*$, where asymptotically optimal means given enough time, the probability the planner returns a trajectory close to the optimal trajectory, is one. 

Although PRM$^*$ and RRT$^*$ perform well for robots with simple dynamics, such as holonomic robots, extending asymptotically optimal sampling-based planners to systems with complex dynamics remains an open problem. The difficulty lies in the fact that both RRT$^*$ and PRM$^*$ perform their sampling in the configuration space and therefore requires a steering function that drives the robot from one configuration to the other. Unfortunately, steering functions are often difficult to find and may not even exist for systems with complex dynamics, such as systems with non-holonomic constraints.  The authors believe  that for the general case, where we have non-linear under-actuated robots, sampling the control space directly may be a better  choice. By sampling the control space directly, the need for a steering function can be eliminated, and the difficulty of dealing with challenging dynamics can be alleviated significantly.

The main contributions of this paper are as follows:
\begin{itemize}
\item We showed that asymptotic optimality in sampling-based motion planning, for any Lipschitz continuous dynamical system, can be achieved by sampling the control space directly. This result eliminates the need for a steering function, which is often computationally expensive or even infeasible to compute for systems with complex dynamics. 
\item  We derived a theoretical bound on the convergence rate for such planners. This result elucidate the problem parameters that effect the convergence rate (e.g. the dimension of the control space and the Lipschitz continuity constant of the underling dynamical system).
\end{itemize}
To the best of our knowledge, the aforementioned results are the first for sampling-based planners that sample the control space directly.  Our  analysis only requires Lipschitz continuity of the differential constraints and the cost function. Furthermore, we present a family of  asymptotically optimal  motion planners for systems with differential constraints that sample the control space directly, and hence does not require  a steering function.  This algorithms build upon existing sampling based planners\cref{Hsu97pathplanning}, and compute admissible trajectories with decreasing cost. 
 
This paper is organized as follows. In the next section, Section (2), we describe related work,  Section (3) formally defines the motion planning problem. In Section (4) we describe  a family of algorithms that sample the control space and discuss their complexity and in the Analysis section (Section(5)), which is the main part of this paper, we provide our theoretical analysis that  includes a novel optimality analysis and a study on the convergence rates of algorithms that sample the control space.  In Section (6) we give some simulation results and finally we close with conclusions and future work in Section (7).

\section{Related Work}
Over the past two decades, several sampling-based motion planners have been proposed. Sampling-based planners can be divided into two broad categories, i.e., graph based approaches (multi-query) and tree based approaches (single-query). Graph-based approaches first construct a random graph in the configuration space and then use the graph to find  admissible paths. This approach includes Probabilistic RoadMap (PRM)\cite{kavraki+1996:prm} \cite{kavraki+1996:ana}, and it variants  \cite{Sun05:Narrow},\cite{Kur06:On},\cite{Ekenna2013iros},\cite{Hsu05:On}, \cite{AmaBayDalJonVal98}. A good summary of sampling-based approaches can be found in \cite{kavraki2005Book}.

Tree-based approaches construct a random tree and stop construction whenever the constructed tree contains a path from a given initial configuration to the goal region. This approach includes the widely used Rapidly-exploring Random Trees algorithm  (RRT) \cite{Lavalle98rapidly-exploringrandom} and its variants e.g. \cite{Yershova2007tro}. Another type of tree-based sampling based planner is the Expansive Space Tree (EST)\cite{Hsu97pathplanning}.  In terms of handling differential constraints, the main difference between the RRT planner  and the  EST planner is that the former performs its sampling in the configuration space and the latter performs its sampling in the control space.

Recently, Karaman and Frazzoli showed that RRT and PRM converge to a non-optimal trajectory and they  developed asymptotical optimal planners e.g. RRT$^*$ and PRM$^*$ \cite{Karaman.Frazzoli:IJRR11},\cite{Karaman2010kinodynamic}. However, a steering function is required to guarantee optimality and thus they are not suitable for general systems with differential constraints. Perez et al. proposed the use of the RRT$^*$  algorithm combined with LQR methods to solve path planning problems with challenging dynamics \cite{Perez2012icra}. Similarly, Tedrake et al. proposed the LQR-Trees for feedback motion planning \cite{Tedrake10lqr-trees:feedback}. However, these methods perform well only close to the linearization area. As a result  the optimality properties of the RRT$^*$ do not hold for the LQR-RRT based algorithms.  More recently, Dobson and Bekris \cite{95kostas} proposed an algorithm that relaxes asymptotic optimality to near-optimality, to speed-up computing the solution. However, they still require a steering function.

To avoid the use of steering function and linearization, one can sample the control space directly using forward propagation. Such planners were initially proposed by Hsu et al. \cite{Hsu97pathplanning}. However, \cite{Hsu97pathplanning} only finds an admissible trajectory, rather than the optimal one.
Recently, Littlefield et al.\cite{Littlefield2013IROS} proposed a planner that combines RRT and direct sampling of the control space, called Sparse-RRT. It builds a tree by choosing a node to expand using the RRT strategy, i.e., sampling a configuration uniformly at random and expanding a node that is nearest to the uniformly sampled configuration. However, it expands the node using random shooting, based on forward propagation of the dynamics.  They showed that Sparse-RRT converges to the near-optimal solution. However, the proof is based on an assumption that random sampling in the control space, i.e., the random shooting process, will converge to a near optimal solution. In this paper, we show that planners that directly sample the control space can converge to the optimal solution without using the aforementioned assumption. 

Our proof is based on the simplest algorithm in the class of planners that sample the control space directly, i.e., uses uniform random sampling to choose the node to expand, and expands the node by sampling the control space uniformly at random. The idea is by showing that this simplest algorithm converges to the optimal trajectory, we have more confident that there will be a more efficient sampling strategies that samples the control space directly and converges to the optimal solution. And therefore, spending more effort to find a better strategy to directly sample the control space would be a worth while pursuit. 

We also present comparison results between the simplest strategy used for proving the convergence result, and the more sophisticated sampling strategy, including one that chooses a node to expand using the RRT strategy (similar to Sparse-RRT). Interestingly, our simulation results indicate that choosing a node to expand using the RRT strategy does not perform well when the system has challenging dynamics. The results and discussion on this are in \sref{s:results}. %and \sref{s:discussion}

In summary, the main contribution of this paper is to analyse the sampling-based motion planning algorithms and shed some light on its challenges. We show that  asymptotically optimal planning can be achieved by directly sampling the control space. Furthermore, we present comparison results of different sampling-based planners that sample the control space directly. As the number of iterations increases, these algorithms generate a trajectory that approaches the optimal control trajectory.

\section{Problem Definition}
\label{ProblemDefinition}
 
 Suppose the set of all possible robot states is $S$ and the set of all possible control inputs is $U$. Then the equation of motion is given by: 
 \begin{equation}
	\dot{\gamma}(t) = \dyn(\gamma(t), \ctsTraj,t)
	\label{e:dyn}
\end{equation}
Where for each time $t$, $\gamma(t)$  $\in$ $\stSpace$ is a state,  $\dot{\gamma}(t)$ is the time derivative of the state and and $\ctsTraj \in \ctrlSpace$ is a control input. Assuming that $S$ and $U$ are manifolds of dimensions $n$ and $m$ where $m \leq n$, using  appropriate charts we can treat $S$ as a subset of $\mathbb{R}^n$ and $U$ as a subset of $\mathbb{R}^m$. Furthermore we assume that $f$ is a Lipschitz-continuous function with respect to state and control with maximum  Lipschitz constant to be less than $L_p \in \mathbb{R^+} $.

Supposed $S_{\text{free}} \subset S $ is the collision free subset of the state space. Then we define path to be a time parametrized mapping from time to the obstacle free state space e.g. $\gamma: [0,T] \rightarrow S_{\text{free}}$. We also define trajectory  or control function to be a time parametrized mapping from time to the control space e.g.  $\phi: [0,T] \rightarrow U$. From  Lipschitz-continuity it can be shown that for each trajectory (control function) there is a unique path that represents the solution to equation of motion \cite{slotine1991applied}.

Let $S_{\text{goal}} \subset S$ be the goal region. We would like to compute the control function (trajectory) and the resulting path that drives the agent from the initial state $\gamma(t=0)$ to the goal region and minimizes a given cost function.  More formally, let $D(\phi)$ be the resulting cost for control function   $\phi$ and $\Phi$ to be the set all admissible trajectories. An admissible trajectory is a trajectory that induces an obstacle free state-space path (through Equation (\ref{e:dyn})) and stops within the goal region. Among all admissible trajectories we would like to compute the one that minimizes the cost function, e.g
\begin{equation}
  \phi^* = \arg\min_{\phi \in \Phi} D(\phi)
\end{equation}

%\vspace{-0.3cm}
\section{Planners that Sample the Control Space}
%\vspace{-0.2cm}

In this section we describe the proposed family of algorithms. Since we are interested in systems with challenging dynamics,  all of the proposed algorithms sample the control space directly. Such algorithms are not new\cite{Hsu97pathplanning}\cite{Littlefield2013IROS}, but the question of whether such planners can converge to the optimal solution is still largely open. To answer this question, we first present two minor modifications of existing planners as detailed in \sref{s:algDesc}.

\subsection{Algorithm Description}
\label{s:algDesc}
\vspace{-0.5cm}

Each of the proposed algorithms constructs a tree  $\mathbb{T}=\{ \mathbb{M}, \mathbb{E} \}$, where the root is the initial state \initConf. 
Each node $q \in \mathbb{M}$ in the tree corresponds to a collision-free state while each edge $\overline{qq'} \in \mathbb{E}$ corresponds to a control input $u \in U$ that drives the robot from $q$ to $q'$ in a given time, without colliding with any of the obstacles while satisfying the equation of motion, Equation (\ref{e:dyn}).  Each node $q \in \mathbb{M}$ is annotated with the cost of reaching $q$ from \initConf. The algorithms are presented in \aref{a:main} -- \aref{a:expandRRT}.

To construct the tree, a node is chosen for expansion and then a control input is sampled uniformly at random. We present two different methods for selecting which node to expand. The first method chooses the node for expansion uniformly at random (\aref{a:expandUniform}), the second method chooses  the node for expansion in an $RRT$ style, \aref{a:expandRRT}, (with a Voronoi bias, similar to Sparse-RRT\cite{Littlefield2013IROS}). 

Once a node to be expanded is selected, the planner samples a control input uniformly at random, choose a time-interval $\Delta t$ on how long the sampled control input will be applied to the system, and use forward propagation to compute the resulting state. As we will see in \sref{s:analysis}, the integration time $\Delta t$ has to be chosen wisely. In order to  guarantee asymptotic optimality, the integration time has to either be sampled uniformly from zero to a maximum integration time or to decrease slowly and approach to zero as the number of iterations approaches to infinity. When constant integration time is used the integration time has to be sufficiently small; In the results section we present results for both cases (constant integration time and uniformly sampled integration time).

To improve computational efficiency, we can prune nodes that have cost larger than the cost of other nodes in their neighborhood (see \aref{a:prune}).  Every time we add a node, we search its neighborhood to find other nodes within a range $R(t)$. $R(t)$ monotonically shrinks as the number of iterations increases and starts from a given number e.g. $R_0$.  If there is a node within a ball of radius $R(t)$ with cost less than the cost of the new node, then we do not add the new node otherwise we add it. On the other hand,  given that we add the new node, we remove all the other nodes that have cost greater  than the cost of the new node.  We also make sure not to delete nodes that are part of the best trajectory found so far.

There are many different variants of the above algorithms that may substantially improve computational time. However, in this paper, we focus on answering the open question of whether a planner that samples the control space directly can converge to the optimal solution. To this end, we will analyze the simplest version of this class of planners, i.e., the uniform variant (\aref{a:expandUniform}). The idea is if we can show that asymptotic convergence holds even for this simplest algorithm, then it is more likely there will be a more efficient strategy to sample the control space such that asymptotic optimality holds, and therefore spending more effort finding a better strategy to directly sample the control space would be a worth while pursuit.

In addition to the analysis, we also present performance comparison between the three different sampling strategies mentioned above, on various motion planning problems involving robots with complex dynamics.

\vspace{0.1cm}
 \begin{algorithm}[!ht]
	\caption{Planner(MethodTobeUsed)}
	\label{a:main}
	\begin{algorithmic}[1]
	{\footnotesize
		\STATE Initialize: Set Initial configuration, Set Configuration Space,  Obstacle Data Structure, BestCost=$+\infty$, $R_o$
		\WHILE{(PlanningTime=TRUE)}
		         \STATE{R(t)=Shrink(Ro,t)}	
			\IF {Simple-Uniform}
				\STATE [Child, Parent, TrajFromParent, u]=ExpandTreeUniform($\mathbb{T}$);
			\ENDIF
			%\IF{Weighted}
			%	\STATE [Child, Parent, TrajFromParent, u]=ExpandTreeWeighted($\mathbb{T}$);
			%	\ENDIF
			\IF {ExpandTreeRRT}
				\STATE [Child, Parent, TrajFromParent, u]=ExpandTreeRRT$^\prime$($\mathbb{T}$);	
			\ENDIF		
				\IF{(CollisionFree(TrajFromParent) $\wedge$ PossiblyOptimal(Child)}
				     \IF{(Prune(T,Child,R(t))=True)}
				          \STATE  NodeCost=Cost(Parent)+Cost(TrajFromParent)
						\IF{(NodeCost$<$BestCost)}
							\STATE NodeList=NodeList  $\cup$ Child 
							\STATE EdgeList=EdgeList $\cup$ $\overline{\textrm{ParentChild}}$
								\STATE BestCost=NodeCost
				    				\STATE BestNode=Child
				      \ENDIF	
				    		\ENDIF
				\ENDIF
		\ENDWHILE
	}
	\end{algorithmic}
\end{algorithm}
\vspace{-0.3cm}
\begin{algorithm}[!ht]
	\caption{ExpandTreeUniform($\mathbb{T}$)}
	\label{a:expandUniform}
	\begin{algorithmic}[1]
	{\footnotesize
		%\STATE Compute the number of nodes in the neighborhood: $D_n(q)$, $ \forall  q \in \mathbb{M}$
		%\STATE Probability to choose a node: $P_n(q) \propto 1/(D_n(q))$,  $ \forall  q \in \mathbb{M}$
		\STATE Sample a node: $ q' \leftarrow$ $rand()$
		%\STATE Sample integration time: $\Delta_t  \leftarrow rand( [0,\Delta_T])$
		\STATE Sample control input: $ u\prime \leftarrow$ $rand()$
		\STATE Propagate: $[q_{new}, TraFromParent ]  \leftarrow \int^{\Delta t}_0 {g}(q' ,u \prime) dt$
		\RETURN $q_{new}, q\prime, TraFromParent, u\prime$
			}
	\end{algorithmic}
\end{algorithm}
%\vspace{-0.3cm}
%\begin{algorithm}[!ht]
%	\caption{ ExpandTreeWeighted($\mathbb{T}$)}
%	\label{a:expand}
%	\begin{algorithmic}[1]
%	{\footnotesize
%		\STATE Compute the number of nodes in the neighborhood: $D_n(q)$, $ \forall  q \in \mathbb{M}$
%		\STATE Probability to choose a node: $P_n(q) \propto 1/(D_n(q))$,  $ \forall  q \in \mathbb{M}$
%		\STATE Sample a node: $ q' \leftarrow$ $rand(P_n(q))$
%		\STATE Sample control input: $ u\prime \leftarrow$ $rand()$
%		\STATE Propagate: $[q_{new}, TraFromParent ]  \leftarrow \int^{\Delta t}_0 {g}(q' ,u \prime) dt$
%		\RETURN $q_{new}, q\prime, TraFromParent, u\prime$
%			}
%	\end{algorithmic}
%\end{algorithm}
\vspace{-0.3cm}
\begin{algorithm}[!ht]
	\caption{ExpandTreeRRT$^\prime$($\mathbb{T}$)}
	\label{a:expandRRT}
	\begin{algorithmic}[1]
	{\footnotesize
		%\STATE Compute the number of nodes in the neighborhood: $D_n(q)$, $ \forall  q \in \mathbb{M}$
		%\STATE Probability to choose a node: $P_n(q) \propto 1/(D_n(q))$,  $ \forall  q \in \mathbb{M}$
	          \STATE Sample a node at random: $ q' \in \cFree$
	          \STATE Find the nearest: $q_{\text{near} }=FindNearest(T)$		
		\STATE Sample control input: $ u\prime \leftarrow$ $rand()$
		\STATE Propagate: $[q_{new}, TraFromParent ]  \leftarrow \int^{\Delta t}_0 {g}( q_{\text{near} } ,u \prime) dt$
		\RETURN $q_{new},  q_{\text{near} }, TraFromParent, u\prime$
			}
	\end{algorithmic}
\end{algorithm}
 \vspace{-0.3cm}
\begin{algorithm}[!ht]
	\caption{Prune$(\mathbb{T},Child,R(t)$)}
	\label{a:prune}
	\begin{algorithmic}[1]
	{\footnotesize
	          \STATE{[Neighbors]=NeighborsWithinRange(Child,R(t))}
	          \FOR {($i=0; i<$sizeof(Neighbors);$i++$)}
	            \STATE{CurrentNeighbor=Neighbors(i)}
	          \IF{CurrentNeighbor$\rightarrow$cost $< $Child$\rightarrow$cost}
	                   \STATE{ToBeRemoved(CurrentNeighbor)}
	                     \RETURN {False}
	          \ENDIF
	          \ENDFOR
                    \STATE{Remove(ToBeRemoved)}	        
	          \RETURN {True}
			}
	\end{algorithmic}
\end{algorithm}
%\vspace{-0.5cm}

\subsection{Complexity}
The pruning is the most expensive part of the proposed framework. To compute the neighbors within a range, the complexity is  $O(log(N))$ (where $N$ is the number of nodes in the tree). To do the cost comparison of each one of  the neighbors, the complexity is  $O(N_e)$ ($N_e$ is the number of neighbors, because of the pruning part $N_e<<N$). To remove nodes from the tree, the complexity is $O(N))$ if we choose to free the allocated memory or $O(N_e)$ if we do not.   Therefore the complexity of the proposed algorithm is $O(log(N))$ per iteration.

\section{Analysis}
\label{s:analysis}

In this section we will provide  a novel analysis on the optimality of planners that directly sample the control space. As we show here, asymptotically optimal planning for systems with differential constraints (e.g. non-holonomic systems) can be achieved without the use of a steering function. We start by showing that under certain conditions,   as the number of samples goes to infinity, the  probability a planner, that directly samples the control space,  samples a sequence of controls that are ``near" to the optimal control function goes to one. Afterwards, we show that under certain conditions, two nearby sequence of controls will induce state space paths that have similar costs.
The optimality proof  described in this section is proven for the case in which we choose a node to expand uniformly at random and without doing pruning.  We believe  that similar properties hold for the other (RRT). In the case we have non-linear systems, as we see from the results section, the convergence for an RRT-style algorithm is slower than the uniform one. Given we know the properties of the underlining sampling strategy (e.g. choose a node to expand and pruning characteristic), immediate results from the analysis below could allow researchers to evaluate any sampling strategy. 

\subsection{Optimality}

First, let's define the notion of ``nearby" sequence of control inputs more formally as
\begin{Definition}
\label{d:1}
Let $\ctrl = (u_0, u_1, \ldots, u_k)$ and $\ctrlp = (u'_0, u'_1, \ldots, u'_l)$ be two sequences of control inputs. Then, \ctrl\ and \ctrlp\ are $\epsilon$-close whenever $\max_{i \in [0, \max(k, l)]} \| u_i - u'_i \| \leq \epsilon$, where $u_i = 0$ for $i > k$ and $u'_i = 0$ for $i > l$.
\end{Definition}

Let $\optCtsTraj: [0,T_g] \rightarrow \ctrlSpace$ denote the optimal trajectory, where $T_g$ is the time to reach the goal state. Suppose the optimal trajectory is approximated with a step function, which is represented  as a sequence of control inputs $\optCtrlSeq = (u^*_1, u^*_2, \ldots, u^*_n)$, where $n = \lfloor \frac{T_g}{\Delta t} \rfloor$, $u^*_i = \phi^*((i-1) \cdot \Delta t)$ for $i \in [1, n]$, and each control input in the sequence is applied for $\Delta t$ time. We can then state our assumptions as,
\begin{itemize}
	\item The optimal trajectory \optCtsTraj is differentiable twice.
	%\item The optimal path $\gamma_{\phi^*}: [0,T_g] \rightarrow S_{\text{free}}$  has $\delta$-clearance. This means that for any  $t \in [0,T_g]$ there exists a ball of radius $\delta$ centered at $\gamma_{\phi^*}$ that is obstacle free e.g. $\mathbb{B_{\delta}}(\gamma_{\phi^{*}} (t))  \subset S_{\text{free}}, \forall t \in [0,T_g] $ \cite{Karaman.Frazzoli:IJRR11}.
	\item Equation (\ref{e:dyn}) (the equation of motion) is Lipschitz continuous on both state and control arguments.
	\item The cost function is Lipschitz continuous.
\end{itemize}

 We also consider the standard $\delta$-clearance assumption \cite{kavraki2005Book}. This assumption is not essential for the convergence of the family of planners that sample the control space to the optimal trajectory (e.g. $\delta$ is arbitrarily small). If $\delta$ is large then we will be able to get admissible trajectories faster than the case $\delta$ is small, however from our analysis does not follow that the convergence to the optimal trajectory depends on $\delta$. 	

Before going into the details of our analysis we would like to outline our proof. Let $\phi^* : [0,T_g] \rightarrow U$ be the 
optimal trajectory and $u^* : [0,T_g] \rightarrow U$ to be a piecewise approximation of the optimal trajectory.\footnote{for this analysis  we are using piecewise constant functions (e.g. step functions)  with constant time intervals $\Delta t$; similar analysis can be derived for different functions and different choices of $\Delta t$.} In addition, we consider the trajectory $u : [0,T_g] \rightarrow U$ to be a trajectory returned by a planner that samples directly the control space. It is important to say that $u$ is of the same form as $u^*$ (e.g. both of them are piecewise constant functions), in addition we assume that $u$ is $\epsilon$-close to $u^*$. The above trajectories are illustrated in the Figure (\ref{fig:trajectories}).
\begin{figure}[htb]
  \begin{center}
\includegraphics[width=5 in]{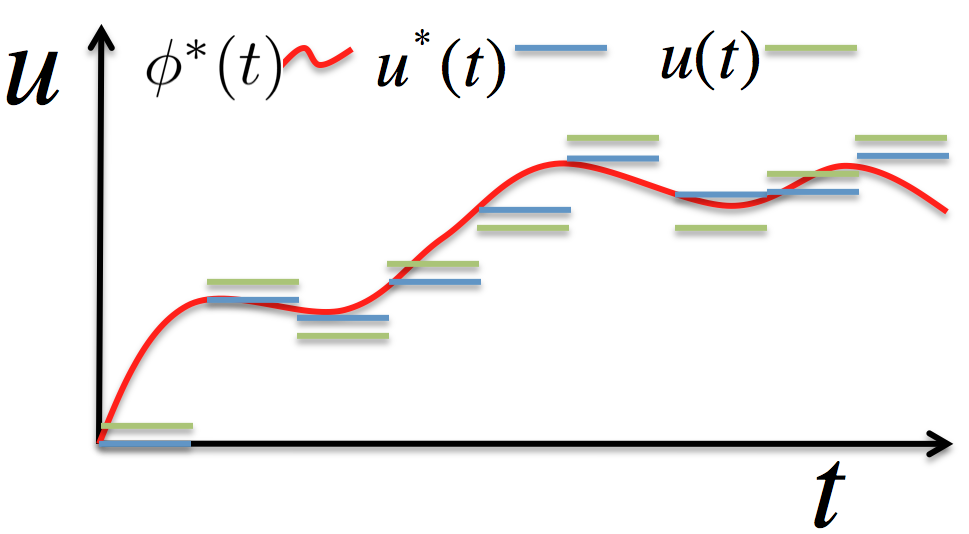}
      \caption{We can see the optimal control input ($\Phi^*(t)$), a representation of the optimal control input ($u^*(t)$)  and a control input that is close to the representation of the optimal control input ($u(t)$ e.g. a control an algorithm that samples the control space returns).}
    \label{fig:trajectories}
  \end{center}
\end{figure} 
We would like to prove that by sampling directly the control space the probability to get a trajectory that is sufficiently close to any approximation of the optimal trajectory approaches to 1 as the number of samples (in the control space) increases.

Before proving the asymptotic optimality property  for the proposed family of planners, we first need to relate the $\delta$-clearance property (which is defined in the state space) to properties on the control space such as the $\epsilon$-close property and the discretization interval $\Delta t$. To this end, we relate the distance between two trajectories (e.g. the optimal one and the one the proposed algorithm returns) with the distance between their induced state space paths. Any sampling based algorithm chooses discrete  control inputs (trajectory), however the optimal control input (trajectory) is  continuous. In our analysis we would like to take into account both distance due to approximation errors and due to the $\epsilon$-close property.
%In our analysis we would like to take into account both the distance of the approximation of the optimal trajectory and the distance of the trajectory we sample to the approximated trajectory
%Suppose \ctrlSeq is a sequence of control inputs with the same number of components as the optimal sequence of control inputs \optCtrlSeq and is $\epsilon$-close to \optCtrlSeq, then the following lemma holds.
%The question is how far away is the cost of \ctrlSeq  from the optimal cost. To answer this question, we first relate the distance between two trajectories and the distance between the two state space paths induced by the trajectories as follows.
\begin{Lemma}
Suppose $\ctsTraj: [0, T_g] \rightarrow \ctrlSpace$ and $\ctsTrajp: [0, T_g] \rightarrow \ctrlSpace$ are two trajectories  that start at time $0$ and end at the same time $T_g$. Let \path{\phi} and \path{\phi'} be the state space paths induced by each trajectory, based on the equation of motion, \eref{e:dyn}. If at each  $t \in [0, T_g]$, \dyn is Lipschitz continuous in both state and control arguments, then $\| \path{\phi} - \path{\phi'} \| \leq \erru(e^{L_pt} -1) \leq  \erru (e^{L_pT_g}-1) $,  $\forall t \in [0, T_g]$, where, $L_p$ is the maximum Lipschitz constant and  $\erru = \max_{t \in [0,T]} \| \ctsTrajp - \ctsTraj \|$
\label{l:pathErr}
\end{Lemma}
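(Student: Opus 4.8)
The plan is to set up an integral inequality for the state-space error and then close it with a Gr\"onwall-type argument. First I would define the error function $E_S(t) = \| \gamma_{\phi}(t) - \gamma_{\phi'}(t) \|$ and observe that the claimed bound forces $E_S(0) = 0$ (since $e^{L_p \cdot 0} - 1 = 0$); I therefore read the hypothesis as implicitly requiring the two paths to share the same initial state, $\gamma_{\phi}(0) = \gamma_{\phi'}(0)$, which is natural here since both are rooted at the initial configuration. Both paths are well-defined solutions of \eref{e:dyn} by the Lipschitz existence-uniqueness fact already noted. Writing \eref{e:dyn} in integral form and subtracting, the common initial state cancels and I obtain
\[
\gamma_{\phi}(t) - \gamma_{\phi'}(t) = \int_0^t \big[ f(\gamma_{\phi}(s), \phi(s), s) - f(\gamma_{\phi'}(s), \phi'(s), s) \big]\, ds ,
\]
so that, taking norms inside the integral, $E_S(t) \le \int_0^t \| f(\gamma_{\phi}(s), \phi(s), s) - f(\gamma_{\phi'}(s), \phi'(s), s) \|\, ds$.

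Next I would estimate the integrand by inserting the intermediate term $f(\gamma_{\phi'}(s), \phi(s), s)$, splitting the difference into a pure state change and a pure control change. Lipschitz continuity in the state argument bounds the first piece by $L_p \| \gamma_{\phi}(s) - \gamma_{\phi'}(s) \| = L_p E_S(s)$, and Lipschitz continuity in the control argument bounds the second by $L_p \| \phi(s) - \phi'(s) \| \le L_p E_U$, using $E_U = \max_t \| \phi'(t) - \phi(t) \|$. Since the explicit time argument is identical in both evaluations, Lipschitz continuity in time is never needed. This produces the linear integral inequality
\[
E_S(t) \le L_p E_U\, t + L_p \int_0^t E_S(s)\, ds .
\]

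To extract the \emph{sharp} exponential constant I would integrate directly rather than invoke a black-box Gr\"onwall lemma. Setting $W(t) = \int_0^t E_S(s)\, ds$, the inequality becomes $W'(t) - L_p W(t) \le L_p E_U\, t$; multiplying by the integrating factor $e^{-L_p t}$ and integrating from $0$ with $W(0) = 0$ gives a closed form for $W(t)$, and substituting back into $E_S(t) \le L_p E_U t + L_p W(t)$ makes the two $L_p E_U t$ contributions cancel, leaving exactly $E_S(t) \le E_U (e^{L_p t} - 1)$. The second inequality $E_S(t) \le E_U(e^{L_p T_g} - 1)$ is then immediate from monotonicity of $e^{L_p t}$ on $[0, T_g]$.

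The individual steps are routine, so the main obstacle is purely bookkeeping: arranging the constants so that the bound emerges as the tight $e^{L_p t} - 1$ with clean prefactor $E_U$, rather than a looser $\mathrm{const}\cdot e^{L_p t}$. This is precisely why I favor the explicit integrating-factor computation, since the cancellation of the linear-in-$t$ terms is what yields both the $-1$ and the exact prefactor that the later analysis (relating $\epsilon$-closeness and $\Delta t$ to the $\delta$-clearance) will rely on.
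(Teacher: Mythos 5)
Your proof is correct and follows essentially the same route as the paper's: split the Lipschitz bound into a state term and a control term, obtain a linear Gr\"onwall-type inequality for the error $E_S(t)$ with forcing term $L_p E_U$, and integrate to get exactly $E_U(e^{L_p t}-1)$. The only difference is in execution --- the paper differentiates the error and solves the differential inequality $\dot{E}_S(t) \le L_p E_S(t) + L_p E_U$ directly, whereas you work with the integral form plus an integrating factor, which is a slightly more careful variant since it avoids asserting differentiability of $\|\gamma_{\phi}(t)-\gamma_{\phi'}(t)\|$ and makes explicit the common-initial-state assumption that the paper states only inside its proof.
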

\begin{proof}
We consider 2 control trajectories with the same time duration applied to the equations of motion with the same initial conditions, and we would like to study the distance of the resulting paths in the state space at each time.  Direct application of Lipschitz continuity on the equations of motion gives:
\small
\begin{eqnarray}
||\dot{\gamma}_{\phi}(t)-\dot{\gamma}_{\phi'}(t)|| = ||f(\gamma_{\phi}(t),\phi(t),t) -f(\gamma_{\phi'}(t), \phi'(t),t )|| \nonumber \\
 \leq L_p || \gamma_{\phi}(t) - \gamma_{\phi'}(t) || + L_p ||\phi(t)-\phi'(t)||
\label{eq:Lipschitz}
\end{eqnarray}
\normalsize
In the above equation $||.||$ indicates the $L_1$ Norm. Because the equation of motion is Lipschitz continuous, it is important to note that for every trajectory there is a unique path at each time \cite{slotine1991applied}.
 
Let $\erru = \max_{t \in [0,T]} \| \ctsTrajp - \ctsTraj \|$ and let $\errs(t) = \| \path{\phi} - \path{\phi'}  \|$. Then, for the $L_1$ norm we have $\derrs(t) \leq ||\dot{\gamma}_{\phi}(t)-\dot{\gamma}_{\phi'}(t)||$.  Direct substitution to Equation (\ref{eq:Lipschitz}) yields the differential  inequality below that describes the evolution of the error on the state space:
\begin{eqnarray}
\derrs(t)  \leq  L_p \errs(t) + L_p \erru
\end{eqnarray}
Solving the above differential equation and applying the boundary condition we get:
 \begin{eqnarray}
 \errs(t) \leq \erru (e^{L_pt}-1)  \leq \erru (e^{L_pT_g}-1), \forall t \in[0,T_g] 
 \label{eq:boundsU}
 \end{eqnarray}

\end{proof}

Given the maximum distance of 2 trajectories in the control space ($\epsilon$-close trajectories), the above Lemma shows a bound on the distance of the induced paths on the state space. 

\begin{Lemma}
\label{l:disErr}
Let $\ctsTraj: [0, T_g] \rightarrow \ctrlSpace$ be a continuous-time trajectory with maximum absolute value of the slope plus the reminder (the Peano form of the remainder) in all dimensions of  $\ctsTraj$  equal to $\alpha.  $\footnote{we do not consider pathological cases where the derivative can be infinity at few singular points e.g.  the tangent  function at zero. From the Taylor theorem using  the Peano form of the remainder we have that $f(t)= f(0) + (f'(t0) + h_1(t))\Delta t$, $h_1(t)$ goes to zero faster than $\Delta t$} Suppose $[0, T_g]$ is discretized into uniform interval $\Delta t$ and $\traj: [0, T_g] \rightarrow \ctrlSpace$ is a step function approximation of \ctsTraj where $\traj = \phi(\lfloor\frac{t}{\Delta t}\rfloor \Delta t)$. If \path{\phi} is the state-space path induced by \ctsTraj and \path{u} is the state-space path induced by \traj, then $\| \path{\phi} - \path{u}\| \leq m \alpha \Delta t (e^{L_pt}-1)$.
\end{Lemma}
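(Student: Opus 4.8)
The plan is to reduce this statement to the previous lemma. The step-function approximation $u$ is itself a control trajectory on $[0,T_g]$ starting and ending at the same times as $\phi$, so Lemma~\ref{l:pathErr} applies verbatim to the pair $(\phi, u)$: it gives $\| \gamma_\phi(t) - \gamma_u(t) \| \leq E_U (e^{L_p t} - 1)$, where now $E_U = \max_{t \in [0,T_g]} \| \phi(t) - u(t) \|$ measures the control-space gap between the continuous trajectory and its discretization. Thus the entire problem collapses to bounding $E_U$ by $m\alpha\Delta t$, after which a single substitution finishes the argument.

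To bound $E_U$, I would fix any $t \in [0,T_g]$ and let $t_0 = \lfloor t/\Delta t \rfloor \Delta t$ denote the left endpoint of the discretization interval containing $t$, so that by construction $u(t) = \phi(t_0)$ and $0 \leq t - t_0 < \Delta t$. The quantity to estimate is then $\|\phi(t) - \phi(t_0)\|$, which I would handle coordinate-wise. For each of the $m$ dimensions, Taylor's theorem with the Peano form of the remainder gives $\phi_j(t) = \phi_j(t_0) + \bigl(\phi_j'(t_0) + h_{1,j}(t)\bigr)(t-t_0)$, hence $|\phi_j(t) - \phi_j(t_0)| = |\phi_j'(t_0) + h_{1,j}(t)|\,|t - t_0| \leq \alpha \Delta t$, using the definition of $\alpha$ as the maximum absolute value of the slope plus remainder across all dimensions together with $|t-t_0| < \Delta t$. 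Summing this over the $m$ coordinates in the $L_1$ norm (the same norm used throughout Lemma~\ref{l:pathErr}) yields $\|\phi(t) - u(t)\| \leq m\alpha\Delta t$, and since $t$ was arbitrary, $E_U \leq m\alpha\Delta t$.

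Combining the two pieces gives $\| \gamma_\phi(t) - \gamma_u(t) \| \leq E_U(e^{L_p t} - 1) \leq m\alpha\Delta t\,(e^{L_p t} - 1)$, which is exactly the claimed bound.

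The main obstacle I anticipate is the per-coordinate Taylor estimate, chiefly because $\alpha$ is defined somewhat informally as the ``slope plus remainder.'' Making this rigorous requires excluding the pathological cases flagged in the footnote (where the derivative blows up at isolated points) and being explicit that the remainder term $h_{1,j}(t)$ is absorbed into the single constant $\alpha$ uniformly in both $t$ and $j$. The factor $m$ is a direct consequence of summing $m$ identical per-coordinate bounds under the $L_1$ norm, so keeping the norm consistent with the preceding lemma is the only real bookkeeping subtlety; everything else reduces to routine substitution.
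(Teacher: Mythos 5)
Your proposal is correct and follows essentially the same route as the paper's own proof: bound the control-space gap $\|\phi(t)-u(t)\|$ by $m\alpha\Delta t$ via a coordinate-wise Taylor expansion with Peano remainder, then invoke Lemma~\ref{l:pathErr} to transfer that bound to the induced state-space paths. Your write-up is somewhat more careful than the paper's (making the per-coordinate estimate and the $L_1$-norm bookkeeping explicit), but the decomposition and key ingredients are identical.
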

%\begin{proof} The proof of this lemma involves straight forward algebraic manipulation, and is left to \appref{}.
\begin{proof}
Using Taylor expansion series for  $\ctsTraj$ (in  all dimensions of the control space) we get:
\begin{eqnarray}
|| \ctsTraj - \traj|| \leq  \sum_1^m \alpha \Delta t = m \alpha \Delta t , \forall t \in [0,T_g]
 \end{eqnarray}
 Using results from Lemma (\ref{l:pathErr}) we get:
\begin{eqnarray}
\| \path{\phi} - \path{u}\| \leq m \alpha \Delta t (e^{L_pt}-1), \forall t \in [0,T_g]
 \end{eqnarray} 

\end{proof}

Applying the above 2 Lemmata to the optimal trajectory,  to its representation (approximation) and to a trajectory that is $\epsilon$-close to the representation of the optimal trajectory, we can define the appropriate $\Delta t$ for a given $\delta$ and $\epsilon$ to be:
\begin{equation}
\|\gamma_{\phi^*} - \gamma_{u} \| \leq (m \alpha \Delta t +\epsilon)(e^{L_p T_g} -1) \leq \delta
\label{eq:deltaBound}
\end{equation}
%\begin{figure}[htb]
%  \begin{center}
%\includegraphics[width=5in]{FiguresChapter3/clearanceFigure2.png}
%      \caption{Clearance}
 %   \label{fig:clearance3}
%  \end{center}
%\end{figure} 
%Figure (\ref{fig:clearance3}) illustrates the $\delta$-clearance property. 
Using the above requirement for $\Delta t$, we can prove the following convergence theorem.

%\begin{theorem}
\begin{thm}
Let $\optCtrlSeq$ denote the optimal sequence of control inputs when the time domain is discretized uniformly into $\Delta t$ intervals. Then, for any $\epsilon > 0$ and $\Delta t$ that satisfies Equation (\ref{eq:deltaBound}), as the number of samples goes to infinity, the probability the proposed planer  samples at least one sequence of control input $u$ that has the same number of elements as \optCtrlSeq  and is $\epsilon$-close to \optCtrlSeq, goes to one.
\label{th:prob}
\end{thm}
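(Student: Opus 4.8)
The plan is to reduce the statement to a purely combinatorial fact about how the random tree grows, and then to establish that fact by a level-by-level induction whose inductive step is driven by a conditional Borel--Cantelli argument. First I would fix notation: write $\optCtrlSeq = (u^*_1, \ldots, u^*_n)$ and call a tree edge \emph{good at level $i$} if the control labelling it lies in the ball $B(u^*_i,\epsilon) \cap \ctrlSpace$, and a node \emph{good at level $i$} if it is reached from the root \initConf\ by $i$ edges that are good at levels $1,\ldots,i$ in order. A good node at level $n$ is then precisely the endpoint of a sampled sequence $u$ with exactly $n$ elements that is $\epsilon$-close to \optCtrlSeq, so it suffices to show such a node appears with probability tending to one.

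Two preliminary observations make this reduction legitimate, and I would dispatch them first. A single uniform control sample lands in $B(u^*_i,\epsilon)\cap\ctrlSpace$ with probability $p_i := \vol(B(u^*_i,\epsilon)\cap\ctrlSpace)/\vol(\ctrlSpace) > 0$, since \ctrlSpace is full-dimensional in $\mathbb{R}^m$; set $p := \min_{1 \le i \le n} p_i > 0$. Moreover, because $\Delta t$ is chosen to satisfy Equation~(\ref{eq:deltaBound}), Lemma~\ref{l:pathErr} and Lemma~\ref{l:disErr} guarantee that the state-space path induced from \initConf\ by \emph{any} $\epsilon$-close control sequence remains within $\delta$ of $\gamma_{\phi^*}$; by the $\delta$-clearance assumption that path is collision-free, so every good extension passes the collision check and every good node is actually inserted into the tree. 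This is the point at which the earlier Lemmata and the $\delta$-clearance hypothesis enter.

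The core is the claim, proved by induction on $i$, that almost surely there is a random iteration after which the tree contains at least one good node at level $i$. The base case $i=0$ is immediate: the root is the unique level-$0$ good node and is never removed, since we analyse the no-pruning variant. For the step, I would assume a good level-$i$ node $v$ exists from some iteration $k_0$ onward. Because each iteration inserts at most one node and nothing is pruned, the node count obeys $N(k) \le k+1$, so at any iteration $k \ge k_0$ the (uniform) probability of selecting $v$ is at least $1/(k+1)$, and conditioned on that the probability of sampling a control good at level $i+1$ is at least $p$. Letting $A_k$ be the event ``a good level-$(i+1)$ node is created at iteration $k$'' and $\mathcal{F}_k$ the history of the first $k$ iterations, this gives $\Pr(A_k \mid \mathcal{F}_{k-1}) \ge p/(k+1)$ on the event that $v$ persists.

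The hard part — indeed the only substantive point — is that this per-step probability $1/(k+1)$ decays to zero as the tree grows, so one cannot simply compound independent constant-probability trials; a naive product would vanish. This is resolved by the divergence of the harmonic series, $\sum_{k \ge k_0} p/(k+1) = \infty$, combined with the conditional (Lévy) form of the second Borel--Cantelli lemma: divergence of $\sum_k \Pr(A_k \mid \mathcal{F}_{k-1})$ forces $A_k$ to occur infinitely often almost surely, so a good level-$(i+1)$ node appears and thereafter persists. This closes the induction, and taking $i = n$ shows that almost surely a good level-$n$ node eventually appears, i.e.\ the planner eventually samples an $n$-element sequence $\epsilon$-close to \optCtrlSeq. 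Finally, since the absence of pruning makes the event ``a good level-$n$ path exists after $K$ iterations'' monotone increasing in $K$, continuity of measure converts the almost-sure eventual occurrence into $\Pr \to 1$ as $K \to \infty$, which is exactly the assertion of the theorem.
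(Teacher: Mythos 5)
Your proposal is correct, but it reaches Theorem \ref{th:prob} by a genuinely different route than the paper, and the comparison is instructive. The paper never argues pathwise: it defines $P_{j,k}$ as the probability that $j$ valid samples have produced at least one sequence $\epsilon$-close to the first $k$ entries of \optCtrlSeq, derives (in Appendix A, via the total probability theorem and Bayes' rule) the recursion $P_{j,k} \geq P_{j-1,k} + \frac{\rho}{j}\left(P_{j-1,k-1} - P_{j-1,k}\right)$ with base case $P_{j,1} \geq P_{j-1,1} + \frac{\rho}{j}\left(1-P_{j-1,1}\right)$, and then analyzes this deterministic recursion by telescoping sums: divergence of $\sum_j \rho/j$ forces the least upper bound of $P_{j,1}$ to be $1$, and an analogous argument at each level forces $P_{\infty,k} = P_{\infty,k-1}$, whence $P_{\infty,k}=1$ for all $k$. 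Your argument replaces the recursion on probabilities with an induction over milestone levels driven by the conditional (L\'evy) second Borel--Cantelli lemma; the same two quantitative ingredients appear (node-selection probability bounded below by roughly $1/j$, control-ball probability $\rho$, respectively your $p$), and the same engine --- divergence of the harmonic series --- does the work, but packaged as an almost-sure statement that is converted to convergence in probability only at the very end via monotonicity and continuity of measure. Your version buys three things: it avoids the paper's somewhat delicate supremum manipulations (the bounds $\lambda_2$ and $\lambda_1^\prime$, whose rigor requires care); it handles boundary effects by taking $p = \min_i p_i$ over balls clipped to \ctrlSpace, where the paper uses a single $\rho$; and, most usefully, it makes explicit why an $\epsilon$-close partial sequence survives collision checking and is therefore actually available in the tree for further expansion (via Equation (\ref{eq:deltaBound}), Lemmata \ref{l:pathErr} and \ref{l:disErr}, and $\delta$-clearance), a step the paper's proof leaves implicit. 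What the paper's formulation buys in exchange is quantitative reusability: the explicit recursion in Equations (\ref{eq:conv1}) and (\ref{eq:conv2}) is precisely what the subsequent Lyapunov and convergence-rate sections integrate to obtain bounds such as Equation (\ref{conveFinal}), whereas your Borel--Cantelli argument certifies eventual success almost surely but yields no rate.
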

\begin{proof}
Each path from the root to a node of the tree $\mathbb{T}$  encodes a particular sequence of control input ( e.g $\overline{qq'}$, the edges of the tree are labeled with control input). The probability the proposed algorithm selects a particular sequence to be extended is the same as the probability it selects a node of  $\mathbb{T}$  to be expanded, which is  $\frac{1}{j}$.

Let $\rho_i$ be the probability the proposed algorithm samples a control input within $\epsilon$ distance from $u^*_i$. Because the new planner samples control inputs uniformly  at random (from the control space) then, for all $i \in [0, |u^*|]$, $\rho_i = \rho = \frac{Vol(\mathbb{B_\epsilon}) }{Vol(U)} $. Where $Vol(.)$ is the volume function.

Let $P_{j, k}$ be the probability that from $j$ valid samples (e.g. for up to $j$ iterations), the proposed algorithm generates at least one control sequence \ctrlSeq that is $\epsilon$-close to the first $k^{th}$ subsequence of \optCtrlSeq, i.e, $|\ctrlSeq| = k$ and $dist(u_i, u^*_i) \leq \epsilon$ for $i \in [1, k]$.  As we show in the Appendix (A), this probability can be written as: 
\begin{eqnarray}
P_{j,k}&\geq& P_{j-1,k} + \frac{\rho}{j} ({1}- \frac{P_{j-1,k}}{P_{j-1,k-1}}) P_{j-1,k-1} \hspace{0.2cm}\text{or} \\
P_{j,k}&\geq& P_{j-1,k} + \frac{\rho}{j} (P_{j-1,k-1}- P_{j-1,k})
\label{eq:conv1}
\end{eqnarray}
The above equation holds for all $j>k, \forall k \in [1,n]$. For the case $j=k, \forall k \in [1,n]$ we have $P_{k,k} \geq \frac{\rho^k}{k!}$. In addition for the base case ($k=1$) we have:
\begin{eqnarray}
P_{j,1}\geq P_{j-1,1} + \frac{\rho}{j} (1- {P_{j-1,1}}) , \forall j>1
\label{eq:conv2}
\end{eqnarray}
Furthermore, induction on $j$ and $k$ would show that  and $P_{j, k}$ is monotonically increasing with respect to both $j$ for a given  $k$ for $j \geq k$ and $k > 1$, of course with an upper bound of 1. In addition, for any finite iteration $P_{j-1,k}$ is smaller than $ {P_{j-1,k-1}}$ (one is subset of the other).  Our goal is to show that $P_{\infty,k}=P_{\infty,k-1}=P_{\infty,k-2}= ... = P_{\infty,1}=1$.

At first we will prove that $P_{\infty,1}=1$.  Let's focus on the inequality in Equation (\ref{eq:conv2}) and rewrite it to the following:
\begin{equation}
	P_{j, 1} - P_{j-1, 1} \geq (1 - P_{j-1, 1}) \cdot \frac{\rho}{j},  \forall j>1 
\end{equation}
We can then calculate the following summation
\begin{eqnarray}
\hspace{-0.75cm}	\sum_{j=2}^{\mu} \left( P_{j, 1} - P_{j-1, 1} \right) &\geq& 	\sum_{j=2}^{\mu} \left( (1 - P_{j-1, 1})  \cdot \frac{\rho}{j} \right) 	\nonumber \\
\hspace{-0.75cm}	P_{\mu, 1} - P_{1, 1} &\geq& 	\sum_{j=2}^{\mu} \left( (1 - P_{j-1, 1})  \cdot \frac{\rho}{j} \right) 	\nonumber
\end{eqnarray}
%%%%%%%%%%%%%%%%%%%%%%%%%%%%%%%%%
Taking $\mu$ to the limit at $\infty$ gives us
\begin{eqnarray}
\hspace{-1.25cm}	\lim_{\mu \rightarrow \infty} \left( P_{\mu, 1} - P_{1, 1}  \right) \geq \lim_{\mu \rightarrow \infty}  \sum_{j=2}^{\mu} \left( (1 - P_{j-1, 1}) \cdot \frac{\rho}{j} \right) 
\label{e:probInf}
\end{eqnarray}
Now, we can set  an upper bound $P_{j-1, 1} \leq \lambda_2$ where $\lambda_2 > 0$, and rewrite \eref{e:probInf} as
\begin{eqnarray}
	\lim_{\mu \rightarrow \infty} \left( P_{\mu, 1} - P_{1, 1}  \right) \geq \left( 1 - \lambda_2 \right) \lim_{\mu \rightarrow \infty}  \sum_{j=2}^{\mu} \frac{\rho}{j} 
	\label{eq:sumConv}
\end{eqnarray}
Since $\lim_{\mu \rightarrow \infty}  \sum_{j=k+1}^{\mu} \frac{\rho}{j} = \infty$, $\lambda_2 \geq 1$. However, since $\lambda_2$ is an upper bound of a probability value, $\lambda_2 \leq 1$. Therefore $\lambda_2$ must be 1, and is the least upper bound of $P_{j-1, 1}$ (Equation (\ref{eq:sumConv}) does not allow  $\lambda_2<1$). Since $P_{j-1, 1}$ is monotonically increasing in $j$, it will eventually approach to $\lambda_2=1$, thus $P_{\infty,1}=1$. In the rest of the proof we will show that $P_{\infty,k}=P_{\infty,k-1}, \forall k$ and thus  $P_{\infty,k}=1$ for all $k$.

  Let's now focus on the inequality in Equation (\ref{eq:conv1}) and rewrite it to the following:
\begin{equation}
	P_{j, k} - P_{j-1, k} \geq (P_{j-1,k-1} - P_{j-1, k}) \cdot \frac{\rho}{j},  \forall j>k
\end{equation}
We can then calculate the following summation
\begin{eqnarray}
\hspace{-0.75cm}	\sum_{j=k+1}^{\mu} \left( P_{j, k} - P_{j-1, k} \right) &\geq& 	\sum_{j=k+1}^{\mu} \left( (P_{j-1,k-1} - P_{j-1, k})  \cdot \frac{\rho}{j} \right) 	\nonumber \\
\hspace{-0.75cm}	P_{\mu, k} - P_{k, k} &\geq& 	\sum_{j=k+1}^{\mu} \left( (P_{j-1,k-1} - P_{j-1, k})  \cdot \frac{\rho}{j} \right) 	\nonumber
\end{eqnarray}
%%%%%%%%%%%%%%%%%%%%%%%%%%%%%%%%%
Taking $\mu$ to the limit at $\infty$ gives us
\begin{eqnarray}
\hspace{-1.25cm}	\lim_{\mu \rightarrow \infty} \left( P_{\mu, k} - P_{k, k}  \right) \geq \lim_{\mu \rightarrow \infty}  \sum_{j=k+1}^{\mu} \left( (P_{j-1,k-1} - P_{j-1, k}) \cdot \frac{\rho}{j} \right) 
\label{e:probInfV2}
\end{eqnarray}
Now, we can set  a lower  bound $(P_{j-1, k-1} -P_{j-1, k})  \geq \lambda_1^\prime$, and rewrite \eref{e:probInfV2} as
\begin{eqnarray}
	\lim_{\mu \rightarrow \infty} \left( P_{\mu, k} - P_{k, k}  \right) \geq \lambda_1^\prime \lim_{\mu \rightarrow \infty}  \sum_{j=k+1}^{\mu} \frac{\rho}{j} 
	\label{eq:sumConvV2}
\end{eqnarray}
Using similar arguments we used for the base case $(k=1)$, we get $\lambda_1^\prime =0$. Due to monotonicity properties and because  $X_{j-1,k}$ is a subset of $X_{j-1,k-1}$ (see Appendix A) thus $P_{j-1,k} < P_{j-1, k-1}$ for all finite $j>k$ then $P_{\infty,k}= P_{\infty,k-1}= ... =  P_{\infty,1}=1$. 
\end{proof}
 
Now, the question is how far away the cost of \ctrlSeq  is  from the optimal cost.

% To this end, we can use \lref{} to find the distance between the state space paths induced by the optimal trajectory and that induced by the sampled sequence of control inputs, and then use LIpschitz continuity of the cost function to find the cost difference between the optimal trajectory and the $\epsilon$-close sequence of control inputs sampled by \nopabr. More formally,  

\begin{Lemma}
Let $\optCtsTraj: [0, T_g] \rightarrow \ctrlSpace$ be the optimal trajectory and \optCtrlSeq be the the sequence of control inputs that approximate \optCtsTraj with time intervals $\Delta t$, and $u:[0,T_g]\rightarrow U$ to be a trajectory that is $\epsilon$-close to \optCtsTraj, where $\Delta t$ is given by Equation (\ref{eq:deltaBound}). Suppose $\path{\phi^*(t)}$, $\path{u^*(t)}$ and $\path{u(t)}$ are the state space paths induced by \optCtsTraj, \optCtrlSeq(t), and $u(t)$ respectively. Then, $\| D(\path{\phi^*},t)-D(\path{u},t) \| \leq L_D E (e^{L_pt} -1) \leq L_D E (e^{L_pT_g} -1), \forall t \in [0,T_g]$, 
where $E=(\epsilon +m \alpha \Delta t)$ and $L_D$ is the maximum  Lipschitz constant for the cost function.
\label{l:costDist}
\end{Lemma}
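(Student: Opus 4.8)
The plan is to reduce the statement about cost differences to the path-distance bounds already established in Lemma~\ref{l:pathErr} and Lemma~\ref{l:disErr}, and then absorb the passage from state distance to cost distance with a single application of Lipschitz continuity of the cost. Notice that the target bound $L_D E (e^{L_p t} - 1)$ factors as $L_D$ (the cost Lipschitz constant) times $E(e^{L_p t}-1)$, and the second factor is exactly the shape of the path-distance bounds from the earlier lemmas once we set $E = \epsilon + m\alpha\Delta t$. So the whole argument is: first bound $\|\gamma_{\phi^*}(t) - \gamma_u(t)\|$ by $E(e^{L_p t}-1)$, then multiply by $L_D$.

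First I would bound the state-space path distance by routing through the step-function approximation $u^*$ of $\phi^*$. The key observation is that $u$ is not literally $\epsilon$-close to the continuous $\phi^*$ in the sense of Definition~\ref{d:1}; rather it is $\epsilon$-close to the discrete approximation $u^*$, so I split the error with the triangle inequality,
$$\|\gamma_{\phi^*}(t) - \gamma_u(t)\| \le \|\gamma_{\phi^*}(t) - \gamma_{u^*}(t)\| + \|\gamma_{u^*}(t) - \gamma_u(t)\|.$$
The first term is the discretization error, bounded by Lemma~\ref{l:disErr} by $m\alpha\Delta t\,(e^{L_p t}-1)$. The second term compares two step-function control sequences that are $\epsilon$-close, so Lemma~\ref{l:pathErr} (with $E_U \le \epsilon$) bounds it by $\epsilon(e^{L_p t}-1)$. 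Summing gives $\|\gamma_{\phi^*}(t) - \gamma_u(t)\| \le (m\alpha\Delta t + \epsilon)(e^{L_p t}-1) = E(e^{L_p t}-1)$, the $t$-dependent refinement of Equation~(\ref{eq:deltaBound}). Applying Lipschitz continuity of the cost function pointwise in $t$ then yields
$$\|D(\gamma_{\phi^*},t)-D(\gamma_u,t)\| \le L_D\,\|\gamma_{\phi^*}(t)-\gamma_u(t)\| \le L_D E(e^{L_p t}-1),$$
and monotonicity of $e^{L_p t}$ on $[0,T_g]$ supplies the final inequality $\le L_D E(e^{L_p T_g}-1)$.

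The calculation is routine once the two lemmas are in hand, so the care needed is mostly conceptual. The main subtlety is the correct accounting of the two independent error sources and ensuring they add rather than interact: the discretization gap between $\phi^*$ and $u^*$ (controlled by $\Delta t$ through the twice-differentiability/Taylor argument underlying $\alpha$), and the sampling gap between $u^*$ and the planner's output $u$ (controlled by $\epsilon$). A second point I would state explicitly is the precise sense in which the cost is Lipschitz: it must be read as Lipschitz in its state-path argument at each fixed $t$, so that the cost difference is dominated by the pointwise state distance; this is what lets the single constant $L_D$ pull out front. No further estimates are required.
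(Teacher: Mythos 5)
Your proof is correct and takes essentially the same approach as the paper's: bound the path distance $\|\gamma_{\phi^*}-\gamma_{u}\|$ by $E(e^{L_p t}-1)$ using Lemma~\ref{l:pathErr} and Lemma~\ref{l:disErr}, then apply Lipschitz continuity of the cost to pull out $L_D$. The only difference is that you write out the triangle inequality through $\gamma_{u^*}$ explicitly (and correctly observe that $u$ is $\epsilon$-close to the discrete approximation $u^*$, not to the continuous $\phi^*$), a decomposition the paper leaves implicit when it cites the two lemmas jointly in Equation~(\ref{eq:cost2}).
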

\begin{proof} 
We consider 3 trajectories and the resulting 3 paths, the first trajectory is the optimal trajectory, the second trajectory is an approximation of the optimal trajectory and the last one is a trajectory that is $\epsilon$-close to the approximation of the optimal trajectory (that can be potentially  sampled by planners that explore the control space).

Using Lipschitz continuity on the cost function we get:
\begin{eqnarray}
\| D(\path{\phi^*},t)-D(\path{u},t) \| \leq  L_D(\| \path{\phi^*} - \path{u} \| )  
\label{eq:cost1}
\end{eqnarray}
 Where $L_D$ is the maximum  Lipschitz constant. From  Lemma (\ref{l:pathErr}) and  Lemma (\ref{l:disErr}) we have:
\begin{eqnarray}
\| \path{\phi^*} - \path{u} \| \leq E (e^{L_pt} -1) \nonumber \\ \leq E (e^{L_pT_g} -1),  \forall t \in [0,T_g]
\label{eq:cost2}
\end{eqnarray} 
%Using the Definition (\ref{d:1}) and results from Taylor expansion series,  It is straightforward  to show that   $ \| \optCtsTraj - u(t)\| \leq \epsilon + m \alpha \Delta_t$
Using Equation (\ref{eq:cost1}) and Equation (\ref{eq:cost2}) we get:
\begin{eqnarray}
\| D(\path{\phi^*},t)-D(\path{u},t) \| \nonumber \\ \leq L_D E (e^{L_pt} -1) \leq L_D E (e^{L_pT_g} -1), \forall t \in [0,T_g]
\end{eqnarray} 
%Where $E= (\epsilon +m \alpha \Delta_t)$
\end{proof}

%Now, we show that as $\epsilon$ goes to 0, the cost of the sampled sequence of control inputs will converge to the optimal cost, i.e.,
\begin{thm} The proposed family of algorithms, that sample directly the control space, return a trajectory that induces path with cost that asymptotically approaches to the optimal cost. 
Let $\phi^*: [0, T_g] \rightarrow \ctrlSpace$ be the optimal trajectory and \optCtrlSeq be  the sequence of control inputs that approximates $\phi^*$ with time intervals $\Delta t$, and $u^j:[0,T_g]\rightarrow U$ to be a trajectory that is $\epsilon$-close to  \optCtrlSeq returned by the algorithm until iteration $j$, where $\Delta t$ is given by Equation (\ref{eq:deltaBound}). Suppose $\gamma_{\phi^*}$, $\gamma_{u^*}$ and $\gamma_{u^j}$ are the state space paths induced by $\phi^*$, \optCtrlSeq, and $u^j$ respectively. If $D(\gamma_{u^j})$ is the cost of the path induced by trajectory $u^j$ after sampling $j$ samples in the control space then:
\begin{eqnarray}
\lim_{j \to \infty}[ {P(\lim_{\epsilon_d \to 0^+ }{\|D(\gamma^j_u) -D(\gamma_{\phi^*}) \|}  )} \leq \epsilon_d]=1 \nonumber
\end{eqnarray} 
Where $\epsilon_d \in \mathbb{R^+}$
\label{th:costDist}
\end{thm}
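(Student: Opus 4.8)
The plan is to combine the probabilistic convergence in the control space (Theorem~\ref{th:prob}) with the deterministic Lipschitz cost bound (Lemma~\ref{l:costDist}), and then drive the approximation error to zero. First I would fix a target cost gap $\epsilon_d > 0$ and use it to select a discretization. Writing $E = \epsilon + m\alpha\Delta t$, Lemma~\ref{l:costDist} states that any trajectory that is $\epsilon$-close to the discretized optimal sequence $u^*$ induces a path whose cost differs from the optimal cost by at most $L_D E(e^{L_p T_g}-1)$. I would therefore choose $\epsilon$ and $\Delta t$ small enough that simultaneously (i) $(m\alpha\Delta t + \epsilon)(e^{L_p T_g}-1) \leq \delta$, so that \eref{eq:deltaBound} together with the $\delta$-clearance assumption guarantees the $\epsilon$-close sampled trajectory remains collision-free and terminates in the goal region (hence is admissible), and (ii) $L_D E(e^{L_p T_g}-1) \leq \epsilon_d$. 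For such a fixed choice the per-step hitting probability $\rho = Vol(\mathbb{B}_\epsilon)/Vol(U)$ is strictly positive and the number of steps $n = \lfloor T_g/\Delta t\rfloor$ is finite, which is exactly what Theorem~\ref{th:prob} needs.

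Second, I would invoke Theorem~\ref{th:prob} for this fixed $\epsilon$ and $\Delta t$: as the iteration count $j \to \infty$, the probability that the planner has sampled at least one control sequence of the same length as $u^*$ and $\epsilon$-close to it tends to one. On the event that such a sequence has been sampled, part (ii) above bounds its cost gap by $\epsilon_d$. Because the Planner (Algorithm~\ref{a:main}) always retains the lowest-cost admissible trajectory found so far (the BestCost/BestNode bookkeeping), the returned trajectory $u^j$ has cost no larger than that of this $\epsilon$-close sample, so $\|D(\gamma_{u^j}) - D(\gamma_{\phi^*})\| \leq \epsilon_d$ holds on that event. Combining the two gives $\lim_{j\to\infty} P\!\left(\|D(\gamma_{u^j}) - D(\gamma_{\phi^*})\| \leq \epsilon_d\right) = 1$ for the fixed $\epsilon_d$.

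Third, I would let $\epsilon_d \to 0^+$. Since the first step shows that for \emph{every} $\epsilon_d > 0$ there exists an admissible discretization achieving that gap with probability tending to one, the nested-limit statement of the theorem follows once the order of the limits is made explicit.

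The main obstacle is precisely the coupling and interchange of the two limits. As $\epsilon_d \to 0$ we are forced to send both $\epsilon \to 0$ and $\Delta t \to 0$, which drives $\rho \to 0$ and $n \to \infty$; for each fixed $\epsilon_d$ the convergence in $j$ still holds because $\rho$ stays positive and $n$ finite, but the iteration count needed for a given confidence depends on $(\epsilon,\Delta t)$ and hence on $\epsilon_d$. I would therefore read the claim as the cleaner equivalent statement ``for every $\epsilon_d > 0$, $\lim_{j\to\infty} P(\text{cost gap} \leq \epsilon_d) = 1$,'' taking the limit in $j$ before the limit in $\epsilon_d$ rather than simultaneously. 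A secondary point that must be checked carefully is that admissibility of the $\epsilon$-close sample (collision-freeness and goal termination) genuinely follows from \eref{eq:deltaBound} and $\delta$-clearance, since only then is the retained best trajectory guaranteed to be comparable to the sampled one and the cost bound transferable to the algorithm's actual output.
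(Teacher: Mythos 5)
Your proposal is correct and follows essentially the same route as the paper's own proof: both combine Lemma~\ref{l:costDist} with Theorem~\ref{th:prob}, choosing $\epsilon$ and $\Delta t$ small enough that $L_D(\epsilon + m\alpha\Delta t)(e^{L_p T_g}-1) \leq \epsilon_d$, and then letting the iteration count go to infinity before shrinking $\epsilon_d$. Your write-up is in fact more careful than the paper's terse two-paragraph argument, since you additionally verify admissibility of the $\epsilon$-close sample via \eref{eq:deltaBound} and the $\delta$-clearance assumption, and you transfer the cost bound to the algorithm's actual output through the BestCost bookkeeping --- steps the paper leaves implicit.
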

\begin{proof}
From Lemma (\ref{l:costDist}) and for any $\Delta t, \epsilon>0$ we get  $\epsilon_d \leq L_D E (e^{L_pT_g} -1)$, where $E=(\epsilon +m \alpha \Delta t)$, then there are choices of  $\epsilon = \epsilon^*$ and $\Delta t=\Delta t^*$ such that $\epsilon_d$ is arbitrarily small:
\begin{eqnarray}
\epsilon^*_d \leq L_D (\epsilon^* +m \alpha \Delta t^*) (e^{L_pT_g} -1)
\end{eqnarray} 
Where $\epsilon^*_d \in \mathbb{R^+}$ is arbitrarily  small, e.g. we can always find $\epsilon^*,\Delta t^*$ such that $\epsilon^*_d \leq \epsilon_d$.

From Theorem (\ref{th:prob}) we know that for any $\Delta t, \epsilon>0$ the probability of sampling a trajectory that is arbitrarily (for any $\Delta t, \epsilon$) close to the optimal trajectory approaches to 1 as the number of samples increases. 
\end{proof}

\subsection{The Lyapunov Approach}
Similar results to Theorem (\ref{th:prob}) can be obtained using the Lyapunov approach. We consider the system in Equation (\ref{eq:conv1}) and Equation (\ref{eq:conv2}), which is  in the discrete domain and describes the probability to get at-least one trajectory with $k$ milestones. We multiply both sides of  the above system times $-1$ and add $1$ in both sides and we get:
\begin{eqnarray}
1-P_{j,1}&\leq& 1-P_{j-1,1} - \frac{\rho}{j} (1- {P_{j-1,1}}) , \forall j>1 \\
1-P_{j,k}&\leq& 1-P_{j-1,k} - \frac{\rho}{j} (P_{j-1,k-1}- P_{j-1,k}) , \forall j>k , k>2
\label{eq:convCont1}
\end{eqnarray}
We define $Q_{j,k}=1-P_{j,k}, \forall k$. In terms of $Q_{j,k}$ the above System is written as follows:
\begin{eqnarray}
Q_{j,1}&\leq& Q_{j-1,1} - \frac{\rho}{j}Q_{j-1,1} , \forall j>1 \\
Q_{j,k}&\leq& Q_{j-1,k} - \frac{\rho}{j} (Q_{j-1,k} - Q_{j-1,k-1}) , \forall j>k , k>2
\label{eq:convCont2}
\end{eqnarray}

We transform this system to its equivalent in the continuous time domain and we get:
\begin{eqnarray}
\dot{Q}_{(t),1}&\leq& - \frac{\rho}{t}Q_{(t),1} , \forall t \geq t_1=\delta_t \nonumber \\
\dot{Q}_{(t),k}&\leq& - \frac{\rho}{t} (Q_{(t),k} - Q_{(t),k-1}) , \forall t \geq t_k = \delta_t k
\label{eq:convCont3}
\end{eqnarray}
Where $\delta_t \in \mathbb{R^+}$ is a time scaling constant that takes the iteration $i$ to time domain $t$, e.g. $t=\delta_t i$. The initial conditions of the above system are given as follows:
\begin{eqnarray}
{Q}_{(t=t_1),1}&=& (1-\rho) \nonumber \\ 
{Q}_{(t=t_2),2}&=& (1-\rho^2/2) \nonumber \\ 
&...&   \nonumber \\ 
{Q}_{(t=t_k),k}&=& (1-\frac{\rho^k}{k!}) \nonumber
\label{eq:convCont4}
\end{eqnarray}
Setting the ``velocities'' in the System in Equation (\ref{eq:convCont3}) equal to zero we can compute the equilibrium point ($Q_{eq,k}, \forall k$) as follows:
\begin{eqnarray}
Q_{eq,k}=Q_{eq,k-1}=...Q_{eq,1}=0 \nonumber
\end{eqnarray}

In order to show that  the System in  Equation (\ref{eq:convCont3})  reaches to the equilibrium point (e.g. $Q_{eq,k}=Q_{eq,k-1}=...Q_{eq,1}=0$) we will use the Lyapunov approach. We consider the following Lyapunov function: 
\begin{eqnarray}
 V(Q_{(t),1},Q_{(t),2}...,Q_{(t),k})= \sum_{i=1}^{k}{(Q_{(t),i})^2} %,  \forall t \geq t_k
\end{eqnarray}
Taking the time derivative of the above  Lyapunov function we get:
\begin{eqnarray}
 \dot{V}(Q_{(t),1},Q_{(t),2}...,Q_{(t),k})= 2\sum_{i=1}^{k}{(Q_{(t),i}) (\dot{Q}_{(t),i}) }
\end{eqnarray}
Using the System in  Equation (\ref{eq:convCont3}), we get
\begin{eqnarray}
 \dot{V}(Q_{(t),1},Q_{(t),2}...,Q_{(t),k}) \leq  -\frac{2 \rho}{t} \left((Q_{(t),1})^2 +\sum_{i=2}^{k}{(Q_{(t),i}) ( {Q}_{(t),i} - {Q}_{(t),i-1}  ) } \right)
\end{eqnarray}

It is important to see that the Lyapunov function is equal to zero at the equilibrium point and its derivative is always negative (but at the equilibrium point). Therefore, the system in  Equation (\ref{eq:convCont3}) reaches  its equilibrium value (e.g. $Q_{eq,k}=Q_{eq,k-1}=...Q_{eq,1}=0$).

\subsection{Convergence Rate}
In this subsection we will perform an analysis on the convergence rate for algorithms that sample the control space directly (using uniform sampling and without pruning). In the previous subsection we have shown that the system in  Equation (\ref{eq:convCont3}) approaches its Equilibrium point (e.g. $Q_{eq,k}=Q_{eq,k-1}=...Q_{eq,1}=0$) and therefore $P_{\infty,k}=1, \forall k$. The question arises here is how fast the system in question approaches to its equilibrium point.

We re-write the system in Equation (\ref{eq:convCont3})  in the following form:
\begin{eqnarray}
\dot{Q}_{(t),k} + \frac{\rho}{t} Q_{(t),k} &\leq& \frac{\rho}{t} Q_{(t),k-1}, \forall t > t_k = \delta_t k \nonumber \\ 
\dot{Q}_{(t),k-1} + \frac{\rho}{t} Q_{(t),k-1} &\leq& \frac{\rho}{t}  Q_{(t),k-2}, \forall t > t_{k-1} = \delta_t (k-1) \nonumber \\ 
&...& \nonumber \\
\dot{Q}_{(t),2} + \frac{\rho}{t} Q_{(t),2} &\leq& \frac{\rho}{t}  Q_{(t),1}, \forall t > t_{2} = \delta_t (2) \nonumber  \\
\dot{Q}_{(t),1} + \frac{\rho}{t} Q_{(t),1} &\leq& 0, \forall t > t_{1} = \delta_t  \nonumber 
\label{eq:convConvergence1}
\end{eqnarray}
With initial conditions as described in the previous section.

At first we will solve for $Q_{(t),1}$. Separating variables ($t$ and $Q_{(t),1}$) and integrating both parts we get:
\begin{eqnarray}
\ln{(Q_{(t),1})} \leq \ln{(t^{-\rho})} + \ln{(C)}
\end{eqnarray}
Where $C$ is the integration constant. To compute $C$ we use the initial conditions e.g. for $t=t_1, Q_{(t),1}=(1-\rho)$.  Using the initial condition and after some algebraic manipulation  we get: 
\begin{eqnarray}
{Q_{(t),1}} \leq (t^{-\rho}) (1-\rho) t_1^\rho
\end{eqnarray}
Similar results we get for the discrete system, see Appendix (B).

Now we will compute $Q_{(t),2}$. To do so we multiply both sides of the the differential equation in question with $\mu(t)=e^{\int \frac{\rho dt}{t}}=t^\rho$ and we get:
\begin{eqnarray}
\dot{Q}_{(t),2}t^\rho + \frac{\rho}{t} Q_{(t),2} t^\rho \leq t^\rho Q_{(t),1}
\end{eqnarray}
For the the left hand side of the above equation we get:  $\dot{Q}_{(t),2}t^\rho + \frac{\rho}{t} Q_{(t),2} t^\rho=\frac{d((t^\rho)Q_{(t),2})}{dt}$
Using results from the previous milestone we get:
\begin{eqnarray}
\frac{d((t^\rho)Q_{(t),2})}{dt} \leq \frac{\rho}{t} (1-\rho) t_1^\rho
\end{eqnarray}
Separating  variables and using the initial conditions (e.g. $Q_{(t_2),2}=(1-\frac{\rho^2}{2!})$) we get:
\begin{eqnarray}
Q_{(t),2} \leq (t^{-\rho}) [\alpha_1 \ln(t) + \alpha_2 ]
\end{eqnarray}
Where $\alpha_1=\rho (1-\rho) t_1^\rho$, $\alpha_2= (1-\frac{\rho^2}{2!}) t_2^\rho - \alpha_1 \ln(t_2)$.

Using similar techniques we can show that:
\begin{eqnarray}
Q_{(t),3} \leq (t^{-\rho}) [ b_1 (\ln(t))^2 + b_2 \ln(t) +b_3 ]
\end{eqnarray}
Where $b_1 = \frac{\alpha_1 \rho}{2}$, $b_2 = \alpha_2 \rho$, $b_3= (1- \frac{\rho^3}{3!}) t_3^\rho - b_1 (\ln(t_3))^2 - b_2 \ln(t_3)$.

 For $Q_{(t),4}$ we get:
\begin{eqnarray}
Q_{(t),4} \leq (t^{-\rho}) [  c_1 (\ln(t))^3 +   c_2 (\ln(t))^2 + c_3 \ln(t) +c_4 ]
\end{eqnarray}
Where $c_1= \frac{\rho b_1}{3}$, $ c_2=\frac{\rho b_2}{2}$, $c_3= \rho b_3$, $c_4= (1- \frac{\rho^4}{4!})t_4^\rho - c_1 (\ln(t_4))^3 - c_2 (\ln t_4)^2 - c_3 \ln(t_4)$.
 
 Now we can clearly see a pattern for the solution to the system in   Equation (\ref{eq:convCont3}). For the general case we will use induction. We assume that:
\begin{eqnarray}
Q_{(t),n-1} \leq (t^{-\rho}) [  d_1 (\ln(t))^{n-2} +   d_2 (\ln(t))^{n-3}... d_{n-2} \ln(t) +d_{n-1} ]
\end{eqnarray} 
Where $d_i, \forall i=[1,n-1]$ are constants.
 
 Using the $n$--th equation from the system  in   Equation (\ref{eq:convCont3}) we have:
\begin{eqnarray}
  \dot{Q}_{(t),n} + \frac{\rho}{t} Q_{(t),n}  \leq  \frac{\rho}{t}  (t^{-\rho}) [  d_1 (\ln(t))^{n-2} +   d_2 (\ln(t))^{n-3}... d_{n-2} \ln(t) +d_{n-1} ]
\end{eqnarray}  
 Multiplying both sides with $t^\rho$ we get:
\begin{eqnarray}
  \dot{Q}_{(t),n}t^\rho + \frac{\rho}{t} Q_{(t),n}t^\rho  &\leq&  \frac{\rho}{t} t^\rho (t^{-\rho}) [  d_1 (\ln(t))^{n-2} +   d_2 (\ln(t))^{n-3}... d_{n-2} \ln(t) +d_{n-1} ] \nonumber \\
  &=& \frac{\rho}{t} [  d_1 (\ln(t))^{n-2} +   d_2 (\ln(t))^{n-3}... d_{n-2} \ln(t) +d_{n-1} ]
\end{eqnarray}
Integrating both parts we get
\begin{eqnarray}
 Q_{(t),n} t^\rho \leq \int \frac{\rho}{t} [  d_1 (\ln(t))^{n-2} +   d_2 (\ln(t))^{n-3}... d_{n-2} \ln(t) +d_{n-1} ] dt +C
\end{eqnarray}
Where $C$ is an integration constant. Using integration tables we know that:
\begin{eqnarray}
\int \frac{(\ln(t))^n dt}{t}= \frac{(\ln(t))^{n+1}}{n+1}
\end{eqnarray}
Using the above mathematical  formula, we get:
\begin{eqnarray}
 Q_{(t),n} \leq  t^{-\rho} [ e_1 (\ln(t))^{n-1}  +  e_2  (\ln(t))^{n-2}   + e_3  (\ln(t))^{n-3}  ... e_{n-1} (\ln(t)) + e_n ]
 \label{conveFinal}
 \end{eqnarray}
Where the constants $e_i = \frac{\rho d_i}{n-i}, \forall i \in [1,n-1]$ and $e_n$ is the constant $C$ computed using the initial conditions (e.g. $Q(t_n)= (1- \frac{\rho^n}{n!})$).

It is important to notice that leading term in Equation (\ref{conveFinal}) is given by:  $ e_1 (\ln(t))^{n-1}t^{-\rho}$, where $e_1=t_1^\rho \frac{(\rho^{n-1})(1-\rho)}{(n-1)!}$. Using the  ``L'Hopital's Rule'' $n-1$ times \cite{LHopital} we see that it goes to zero (as expected). Another form of  Equation (\ref{conveFinal}) is as follows:
\begin{eqnarray}
 Q_{(t),n} \leq  t^{-\rho} [ e'_1 \frac{(\ln(t))^{n-1}}{(n-1)!}  +  e'_2  \frac{(\ln(t))^{n-2}}{(n-2)!}   + e'_3  \frac{(\ln(t))^{n-3}}{(n-3)!}  ... e'_{n-1} \frac{(\ln(t))}{1} + e_n ]
 \label{conveFinal2}
 \end{eqnarray}
 Where $ e'_i$ are constants.

The above inequality gives a bound on the probability ``not to get at least one trajectory with $n$ milestones that is $\epsilon$--close to an approximation of the optimal trajectory (with $n$ milestones and sufficiently small $\Delta t$ such that $n = \lfloor \frac{T_g}{\Delta t} \rfloor$ )''  for any positive $\rho$, which  is proportional to $\epsilon$. In the case, that the integration time is sampled uniformly form an interval e.g. $[0,\tau], \tau>0$, then the probability not  to get at least one  trajectory that is $\epsilon$--close to \emph{any} approximation of the optimal trajectory, with $n$ milestones (each of them resulted in after applying control input for at most $\Delta t$), is given by a similar inequality (change $\rho$ to $\rho \frac{\Delta t}{\tau}$).  However to cover an optimal trajectory with time duration $T_g$ it takes $\frac{2 T_g}{\Delta t}$ expected number of milestones.

\subsection{Different Sampling Strategies}

Both the optimality proof and the convergence analysis hold for planners that sample directly the control space by choosing a node to expand uniformly at random, applying control input at random and without using pruning.  One can use artificial intelligence methods to guide sampling and thus improve the convergence  rates. For example, let the probability to choose the ``correct'' node for expansion to be $f_1(\frac{1}{j})>\frac{1}{j}$ (where $f_1:\mathbb{R^+} \rightarrow \mathbb{R^+}$), then the summation in the right-hand side of Equation (\ref{eq:sumConv}) approaches faster to infinity and thus the probability $P_{j,k}$ approaches faster to 1.   One way to do this is to use pruning techniques, however when we use pruning techniques it is very difficult to compute the convergence rate.

In addition, we see in the previous section the convergence rate depends directly on  $\rho$. We recall that $\rho$ is the probability to choose a control input that is within a ball of radius $\epsilon$ centered at the optimal one. In the case we use uniform sampling of the control space  $\rho$ is given by the ratio of the volume of that ball divided by the volume of the control space. Therefore, if we use artificial intelligence methods to guide the control sampling process then we can directly increase $\rho$ and therefore effect the convergence of the proposed approach. Studying and analyzing different sampling strategies is not within the scope of this research; we recall that  the goal of this work is to show that sampling in the control space directly  can achieve asymptotically optimal planning.

%%%%%%%%%%%%%%%%%%%%%%%%%%%%%%%%%%%%%%%%%%%%%%%%%%%%%%%%%%%%%%%%%%%%%%%%%%%%%%%%%%%%%%%%%%%%%%%%%%%%%%%%%%%%%%%%%%%%%%%%%%%%%%%%%%%%%%%%%%%%%%%%%%%%%%%%%%%%%%%%%%%%%%%%%%%%%%%
\section{Simulation Results}
\label{s:results}
This section presents performance comparison between the simplest sampling strategy we have analyzed in the previous section with more sophisticated strategies, i.e., uniform with pruning and expandTreeRRT with pruning, which is a simplified version of Sparse-RRT\cite{Littlefield2013IROS}. For this comparison, we use the problem of ``Pendulum on a Cart'' with obstacles (see Figure (\ref{fig:CartAndPendulum})).
 \begin{figure}[!htpb]
\begin{center}
\includegraphics[height=2.5in,width=3in] {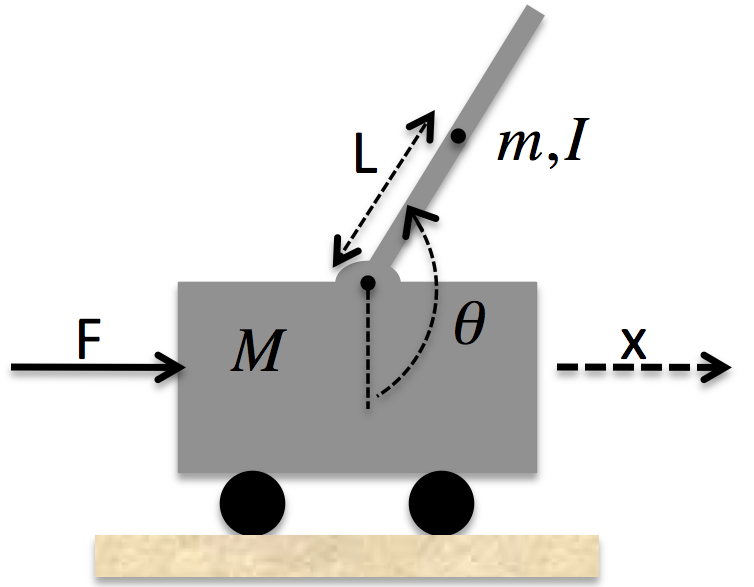} 
\end{center}
\caption{Pendulum on a Cart}
\label{fig:CartAndPendulum}
\end{figure}

%%%%%%%%%%%%%%%%%%%%%%%%%%%%%%%%%%%%%%%%%%%%%%%%%%%%%%%%%%%%%%%%%%%%%%%%%%%%%%%%%%%%%%

% ---a problem with complex dynamics in which a (possibly non-optimal) solution has been found using a variant of RRT\cite{Mae10:Feedback}. 
This is a highly-non-linear 2$^{nd}$ order under-actuated  system with infinite number of  equilibrium points both stable and unstable. The system's state space is 4-dimensional, e.g. $X \in \mathbb{R}^4$, with $X=[x,v,\theta,\Omega]^T$, where $x$ indicates the displacement of the cart ( with mass $M$), $v$ is the velocity of the cart (e.g. $v=\dot{x}$), $\theta$ indicates the angle of the pendulum (with mass $m$, inertia $I$ and length $L$), and $\Omega$ is the angular velocity of the pendulum (e.g. $\Omega=\dot{\theta}$ ). Using the  Euler--Lagrange equations we can derive the equations of motion:
\footnotesize{
\begin{eqnarray}
\dot{x}(t)&=&v(t) \nonumber \\ 
\dot{v}(t)&=&  \frac{(I+mL^2)(F(t)+ mL\Omega^2(t) \sin(\theta(t)))  + (mL)^2 \cos(\theta(t))\sin(\theta(t))g}{ (M+m)(I+mL^2) - (mL)^2 \cos^2(\theta(t))} \nonumber  \\
\dot{\theta}(t)&=&\Omega(t) \nonumber \\
\dot{\Omega}(t)&=& \frac{(-mL\cos(\theta(t)))(F+mL\Omega^2(t) \sin(\theta(t))) +(M+m)(-mgL\sin(\theta(t)))  }{(M+m)(I+mL^2) - (mL)^2 \cos^2(\theta(t))} \nonumber \\
\end{eqnarray}
}
\normalsize
For this problem we would like to minimize control effort and time to the goal, thus we define as a cost function the following:
\begin{eqnarray}
D(F)= \int_0^{T_f} (a_fF^2 +a_t  )\,dt.
\end{eqnarray}
where: $a_t=1000 \frac{cost}{s}$,  $a_f=1 \frac{cost}{N^2 s}$, $ M=10kg, m=5kg, I=10kgm^2, L=2.5 m$ (see Figure (\ref{fig:CartAndPendulum})), $g=9.86\frac{N}{s^2}$, the input to the system is a force $F$ acting on the cart in the $x$ direction, for this case $F$ is uniformly sampled e.g.  $F=[0,300]N$. The initial conditions are all zero,  the workspace is such that: $x=[0,60]m$, and includes obstacles  as shown in Figure (\ref{fig:pendResults}).  The goal region is located in the right side $(48m< x <52 m)$ in the upright position ($(180-10)^{\circ}<\theta<(180+10) ^{\circ}$) with $  -3.14 rad/s <\Omega <3.14 rad/s$ and  $(-4m/s< v < 4m/s)$.\footnote{the goal here is not to stabilize the system in the upright position. After a method like the one described here brings the system close to the upright position, one can use closed loop control methods to stabilize it there.}

\begin{figure}[!th]
\begin{center}
\fbox{\includegraphics[ height=1.2in]{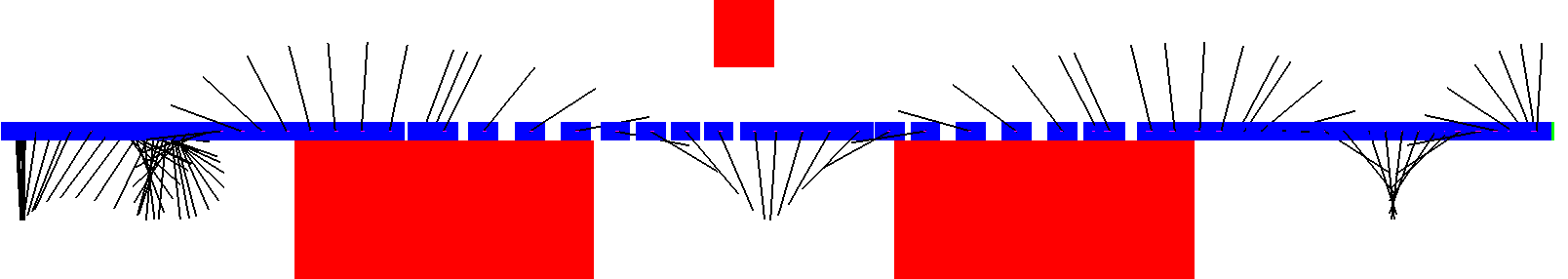}}
\caption{Pendulum on a Cart:  A typical close-to-optimal trajectory.}
\label{fig:pendResults}
\end{center}
\end{figure}

We implemented the algorithms in C++ in a Dell computer that runs Linux on  32  Intel $x86-64$ processors at  1.2 GHz.  Each one of the results presented here represents average results over 21 runs.
%%%%%%%%%%%%%%%%%%%%%%
Figure (\ref{f:cartPole}) shows the cost of the trajectories generated by  simple-uniform (the algorithm we used for analysis), simple-uniform with pruning, and expandTreeRRT with pruning.  As expected, the expandTreeRRT method computes an admissible trajectory faster than the uniform approaches.  What is interesting is the expandTreeRRT with pruning converges to a further lower cost ($2\times10^8$) much slower than simple-uniform with and without pruning. This happens because the optimal solution occupies only a small region of the state space. To reach a near optimal solution fast, one needs to sample more densely near the region of optimal solution. RRT's expansion strategy which tries to cover the entire state space well will actually ``under sample" state space region that are near optimal solution and ``over sample" other regions that do not contribute in generating the optimal trajectory. Note that this does not mean that the expansion strategy in simple-uniform is a suitable one, rather the simulation results indicate that finding sampling strategy that biases sampling towards regions near optimal trajectory would be a more fruitful avenue. % In addition, highly non-linear dynamical systems 

\begin{figure}[!h]
	\centering
%	\begin{tabular}{c@{\hspace*{3pt}}c@{\hspace*{3pt}}c}
		\includegraphics[width=3in] {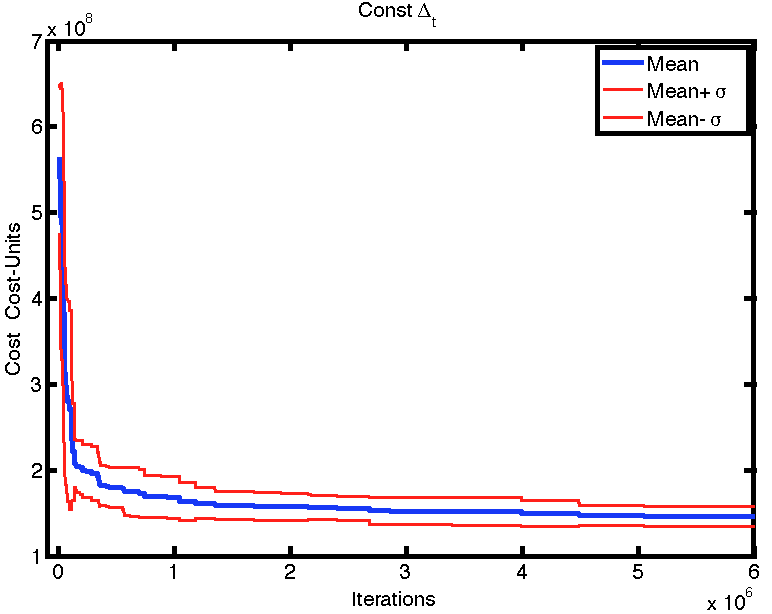}  \\
		\includegraphics[width=3.5in] {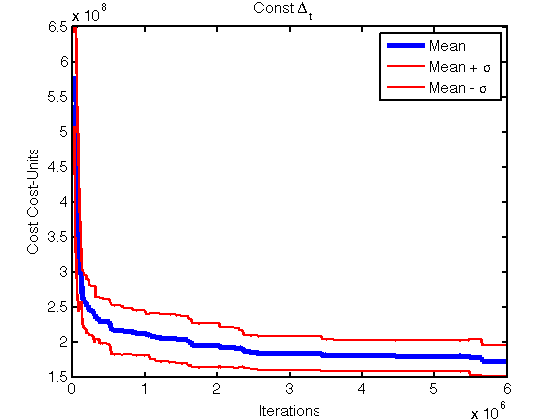}  \\
		\includegraphics[width=3.5in] {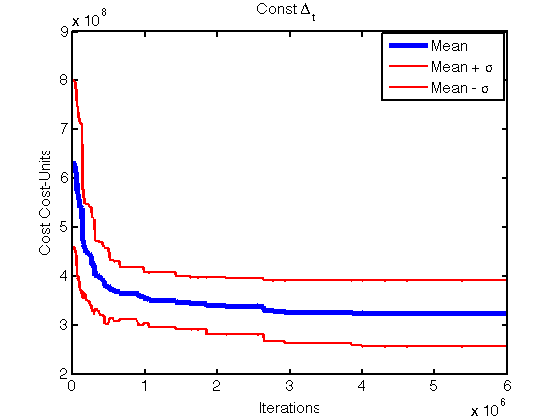} 
%	\end{tabular}
	\caption{Cart and pole simulation results. Upper: Simple-uniform without pruning. Middle: Simple-uniform with pruning. Bottom: ExpandTreeRRT with pruning. The iterations shown are the valid iterations (do not count collisions), the actual iterations are approximate 2 times more.}
	\label{f:cartPole}
\end{figure}

 \begin{figure}[!htpb]
\begin{center}$
\begin{array}{cc}
\includegraphics[height=2.5in,width=3in] {FiguresChapter3/PendulumConstantTime.png} & \includegraphics[height=2.5in, width=3in] {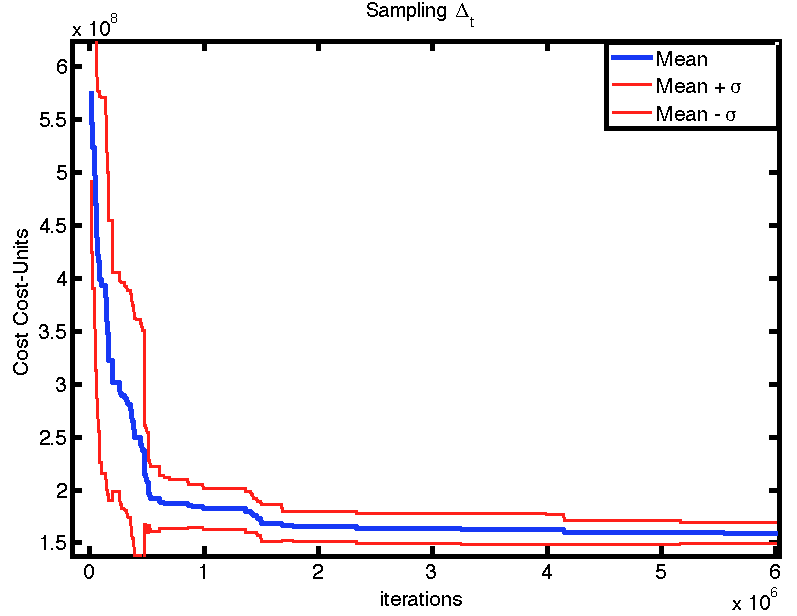} %\text{ Mean cost value }								&  \text{Mean}  &  \text{Mean}
\end{array}$
\end{center}
\caption{Pendulum on a Cart, statistics (results obtained by choosing a node to expand uniformly and using pruning).  Left hand side: the statistics for the case  constant integration time $\Delta t=1s$ is used. Right hand side:  the case the integration time is uniformly sampled from $[0,3s]$ is used. The iterations shown are the valid iterations (do not count collisions), the actual iterations are approximate 2 times more.}
\label{fig:pendStatistics}
\end{figure}
In addition to comparing sampling strategies, we also try to understand the effect of different ways of setting time discretization. Figure (\ref{fig:pendStatistics}) shows the statistics for the case of constant integration time $\Delta t=1s$ and in the case we sample integration time from $[0,3s]$. In both cases the results were taken using simple-uniform with pruning. When we use constant integration time (which is sufficiently small) we get results similar to the ones we get when we sample the integration time, however the results for constant integration time, for small number of iterations are slightly better than the ones we get by sampling the integration time.

\section{Summary and Discussion}

Most methods for solving  the problem of optimal motion planning use direct exploration of the configuration space. These methods sample the configuration space and rely on steering functions to connect the configuration space with the control space. However, for general non-linear systems, a steering function is not always available and can be expensive to compute. This paper shows that we can solve optimal motion planning problem without steering functions by sampling the control space directly.

In this paper, we present a novel analysis on asymptotic optimality of a family of algorithms that directly sample the control space. Our analysis is based on the simplest method in this family of algorithms. We also present a  comparison result between the simplified method we used for analysis and more sophisticated sampling methods in this family of algorithms.  As the number of iterations increases, the trajectory these algorithms sample approaches the optimal trajectory. 

Many avenues are possible for future improvement. Can we generalize the theoretical results further. For instance, will the asymptotic optimality property holds when Lipschitz continuity does not hold. Furthermore, this paper only shows that it is possible to solve optimal motion planning problems without a steering function. How to design such an efficient optimal motion planner  remains an open problem.

\subsubsection*{Acknowledgments}
This work was supported by the Singapore-MIT Alliance for Research and Technology (SMART) Center for Environmental Sensing and Modeling (CENSAM). We would also like to thank Professor Sertac Karaman for useful discussions.
\begin{appendices}

\section*{Appendix A: Probability to Get at Least one Trajectory that is $\epsilon$--close to Any Approximation of the Optimal Trajectory}
\label{appA}

Let the event $X_{i,k}$ be the following event: `` From $j$ valid samples (up to $j$ iterations),  an algorithm, that samples the control space, generates at least one control sequence \ctrlSeq that is $\epsilon$-close to the first $k^{th}$ subsequence of \optCtrlSeq, i.e, $|\ctrlSeq| = k$ and $dist(u_i, u^*_i) \leq \epsilon$ for $i \in [0, k]$.  Then using the total probability theorem and condition on the events $X_{i-1,k}$ and  $\bar{X}_{i-1,k}$ (up to the previous iteration the underlining algorithm  (does not) return(s) at least one trajectory with the desired characteristics) is given by:
\begin{eqnarray}
Pr(X_{j,k})&=& Pr(X_{j,k} | X_{j-1,k}) Pr(X_{j-1,k}) +  Pr(X_{j,k} | \bar{X}_{j-1,k}) (1-Pr(X_{j-1,k}) ) \Rightarrow \nonumber \\
Pr(X_{j,k})&=& Pr(X_{j-1,k}) +  Pr(X_{j,k} | \bar{X}_{j-1,k}) (1-Pr(X_{j-1,k}) )
\end{eqnarray}
To compute the term  $Pr(X_{j,k} | \bar{X}_{j-1,k}) $ in the above equation, we use the total probability theorem condition on the events $X_{j-1,k-1}$ and  $\bar{X}_{j-1,k-1}$. 
\begin{eqnarray}
Pr(X_{j,k} | \bar{X}_{j-1,k})= &Pr&(X_{j,k} | \bar{X}_{j-1,k} \bigcap \bar{X}_{j-1,k-1}) Pr(\bar{X}_{j-1,k-1} | \bar{X}_{j-1,k}) +  \nonumber \\ 
&Pr&(X_{j,k} | \bar{X}_{j-1,k} \bigcap {X}_{j-1,k-1}) Pr({X}_{j-1,k-1} | \bar{X}_{j-1,k})  
\end{eqnarray}
Because $X_{j-1,k}$ is a subset of  $X_{j-1,k-1}$, then $ Pr(X_{j,k} | \bar{X}_{j-1,k} \bigcap \bar{X}_{j-1,k-1}) = Pr(X_{j,k} | \bar{X}_{j-1,k-1})=0$. In addition, by the definition of $\rho$ for the second term in the above equation we get $Pr(X_{j,k} | \bar{X}_{j-1,k} \bigcap {X}_{j-1,k-1}) \geq  \frac{\rho}{j}$. Where $\frac{1}{j}$ is the probability to sample any node for expansion (in the case we use uniform sampling). Similar analysis holds for the case we do not use uniform sampling. Using the above we get:
\begin{eqnarray}
Pr(X_{j,k} | \bar{X}_{j-1,k}) &\geq& \frac{\rho}{j}Pr({X}_{j-1,k-1} | \bar{X}_{j-1,k}) \nonumber \\
Pr(X_{j,k})&\geq& Pr(X_{j-1,k}) + \frac{\rho}{j}Pr({X}_{j-1,k-1} | \bar{X}_{j-1,k})  (1-Pr(X_{j-1,k}) )
\label{eqAP:1}
\end{eqnarray}
 Using the Bayes' theorem (alternatively Bayes' law or Bayes' rule), we get
\begin{eqnarray}
Pr({X}_{j-1,k-1} | \bar{X}_{j-1,k})= \frac{Pr( \bar{X}_{j-1,k}| {X}_{j-1,k-1}) Pr({X}_{j-1,k-1})} {Pr( \bar{X}_{j-1,k})} 
\label{eqAP:2}
\end{eqnarray} 
Using Equation (\ref{eqAP:1}) and Equation (\ref{eqAP:2}) we get:
\begin{eqnarray}
Pr(X_{j,k})\geq Pr(X_{j-1,k}) + \frac{\rho}{j} Pr( \bar{X}_{j-1,k}| {X}_{j-1,k-1}) Pr({X}_{j-1,k-1}) \Rightarrow \nonumber \\
Pr(X_{j,k})\geq Pr(X_{j-1,k}) + \frac{\rho}{j} (1- Pr({X}_{j-1,k}| {X}_{j-1,k-1}) ) Pr({X}_{j-1,k-1})
\label{eqAP:3}
\end{eqnarray}

Using,  the definition of conditional probabilities and because ${X}_{j-1,k}$ is a subset of   ${X}_{j-1,k-1}$
\begin{eqnarray}
Pr({X}_{j-1,k}| {X}_{j-1,k-1}) = \frac{Pr({X}_{j-1,k} \bigcap {X}_{j-1,k-1})}{Pr({X}_{j-1,k-1})}= \frac{Pr({X}_{j-1,k})}{Pr({X}_{j-1,k-1})}
\label{eqAP:4}
\end{eqnarray}
Substituting Equation (\ref{eqAP:4}) in  Equation (\ref{eqAP:3}) we get:
\begin{eqnarray}
Pr(X_{j,k})&\geq& Pr(X_{j-1,k}) + \frac{\rho}{j} (1- \frac{Pr({X}_{j-1,k})}{Pr({X}_{j-1,k-1})}) Pr({X}_{j-1,k-1}) \\
Pr(X_{j,k})&\geq& Pr(X_{j-1,k}) + \frac{\rho}{j} (Pr({X}_{j-1,k-1})- Pr({X}_{j-1,k}))\label{eqAP:5}
\end{eqnarray}

The above Equation holds for all $j>k, \forall k \in [1,n]$. For the case $j=k, \forall k \in [1,n]$ we have $P_{k,k} \geq \frac{\rho^k}{k!}$. In addition for the base case we have:
\begin{eqnarray}
Pr(X_{j,1})\geq Pr(X_{j-1,1}) + \frac{\rho}{j} (1- \frac{Pr({X}_{j-1,1})}{Pr({X}_{j-1,0})}) Pr({X}_{j-1,0}) \nonumber \Rightarrow \\ 
Pr(X_{j,1})\geq Pr(X_{j-1,1}) + \frac{\rho}{j} (1- {Pr({X}_{j-1,k})}) 
\label{eqAP:6}
\end{eqnarray}

For simplicity, in the main part of this paper we use $P_{i,k}$ instead of $Pr(X_{j,k})$.

\section*{Appendix B: Convergence Rate for the First Milestone for the Discrete System}
\label{appB}

Here we will perform an analysis on the discrete system for the convergence rate for the first milestone (e.g. $k=1$). Again, we consider uniform sampling without pruning. To simplify our notations, Instead of using inequalities we will use equalities and work with bounds. We consider another system as follows:
\begin{eqnarray}
\hat{P}_{j,1}&=&\hat{P}_{j-1,1} +(1-\hat{P}_{j-1,1} )\frac{\rho}{j}  , \forall j \geq k, k=1
\end{eqnarray}
Thus, $\hat{P}_{j,1} \leq P_{j,1}$, we define $Q_{j}=1-\hat{P}_{j,1}$ thus
\begin{eqnarray}
Q_j&=&Q_{j-1}(1-\frac{\rho}{j}) , \forall j \geq k, k=1
\label{e:inverseSystem}
\end{eqnarray}
It is easy to see that as the number of iterations increases $Q_j$ goes to 0. Let $\alpha_1$ be the convergence rate then we have:
\begin{eqnarray}
Q_j&=& j^{-\alpha_1} \\
Q_{j-1}&=& {(j-1)}^{-\alpha_1}
\end{eqnarray}
Substituting this equation on the above equations we get:
\begin{eqnarray}
(\frac{j}{j-1})^{-\alpha_1} &=& (1- \frac{\rho}{j}) \nonumber \\
\log{ (\frac{j}{j-1})^{-\alpha_1}} &=& \log{(1- \frac{\rho}{j})}  \nonumber \\
\log{ (\frac{j-1}{j})^{\alpha_1}} &=& \log{(1- \frac{\rho}{j})}  \nonumber \\
\alpha_1&=&\frac{  \log{(1- \frac{\rho}{j})} }{ \log{ (1-\frac{1}{j})  }}
\end{eqnarray}

Using Taylor series \cite{lagrange1813theorie} we have $\log(1-x)\approx -x, x <1 \in \mathbb{R}$  thus the convergence rate for large $j$ is equal to $\rho$. More formally,
\begin{eqnarray}
%\lim_{j \to \infty}{\frac{  \log{(1- \frac{\rho}{j})} }{ \log{ (1-\frac{1}{j})  }}} \approx \rho
\lim_{j \to \infty}{\frac{  \log{(1- \frac{\rho}{j})} }{ \log{ (1-\frac{1}{j})  }}} = \frac{0}{0}
\end{eqnarray}
Using the  ``L'Hopital's Rule'' \cite{LHopital} we get: 
\begin{eqnarray}
\lim_{j \to \infty}{\frac{  \log{(1- \frac{\rho}{j})} }{ \log{ (1-\frac{1}{j})  }}} = \lim_{j \to \infty} { \frac{ \frac{\rho}{j^2 \|1-  \rho/j \|}   }{  \frac{1}{j^2 \|1-  1/j \|}  }}= \rho
\end{eqnarray}

The above results  agree with our analysis for the continuous equivalent system.

%%%%%%%%%%%%%%%%%%%%%%%%%%%%%%%%%%%%%%%%%%%%%%%%%%%%%%%%%%%%%%%%%%%%%%%%%%%%%%%%%%%%%%%%%%%%%%%%%%%%%%%%%%%%%%%%%%%%%%%%%%%%%%%%%%%%%%%%%%%%%%%%%%%%%%%%%%%%%%%%%%%%%%%%%%%%%%%%%%%%%%%%%%%%%%%%%%%%%%%%%%%%%%%%%%%%%%%%%%%%%%%%%%%%%%%%%%%%%%%%%%%%%%%%%%%%%%%%%%%%%%%%

\end{appendices}

\bibliographystyle{apalike}
\bibliography{PathPlanningLibrary.bib,ROB-13-0005xx_myLibrary}
\end{document}